\documentclass{article}

\usepackage{microtype}
\usepackage{graphicx}
\usepackage{subcaption}
\usepackage{booktabs} 
\graphicspath{{./}}
\usepackage{multirow}
\usepackage{algorithmic}
\usepackage{tabularx}
\usepackage{makecell}
\usepackage{float}
\usepackage{threeparttable}
\newcolumntype{Y}{>{\raggedleft\arraybackslash}X} 

\usepackage{xcolor} 
\usepackage{hyperref}

\usepackage[preprint]{icml2026}

\usepackage{amsmath}
\usepackage{amssymb}
\usepackage{mathtools}
\usepackage{amsthm}

\usepackage[capitalize,noabbrev]{cleveref}

\theoremstyle{plain}
\newtheorem{theorem}{Theorem}[section]

\theoremstyle{definition}

\theoremstyle{remark}

\usepackage[textsize=tiny]{todonotes}

\newcommand{\pmstd}[2]{#1\,\scriptsize\,\ensuremath{\pm}\,#2}
\newcommand{\best}[1]{\textbf{#1}}
\newcommand{\second}[1]{\underline{#1}}
\newcommand{\rotfam}[1]{\rotatebox[origin=c]{90}{\scriptsize\bfseries #1}}
\newcommand{\cmark}{\ensuremath{\textcolor{green!60!black}{{\checkmark}}}}
\newcommand{\xmark}{\ensuremath{\textcolor{red!70!black}{{\times}}}}

\icmltitlerunning{Multi-Agent Debate: A Unified Agentic Framework for Tabular Anomaly Detection}

\begin{document}

\twocolumn[
  \icmltitle{Multi-Agent Debate: A Unified Agentic Framework for Tabular Anomaly Detection}



  \icmlsetsymbol{equal}{*}

  \begin{icmlauthorlist}
    \icmlauthor{Pinqiao Wang}{uva}
    \icmlauthor{Sheng Li}{uva}
  \end{icmlauthorlist}

  \icmlaffiliation{uva}{University of Virginia, Charlottesville, USA}

  \icmlcorrespondingauthor{Sheng Li}{shengli@virginia.edu}

  \icmlkeywords{Machine Learning, Agentic AI, Artificial Intelligence, Tabular Data Analysis, Anomaly Detection}

  \vskip 0.3in
]



\printAffiliationsAndNotice{}  

\begin{abstract}
Tabular anomaly detection is often handled by single detectors or static ensembles, even though strong performance on tabular data typically comes from heterogeneous model families (e.g., tree ensembles, deep tabular networks, and tabular foundation models) that frequently disagree under distribution shift, missingness, and rare-anomaly regimes. We propose \textbf{MAD}, a Multi-Agent Debating framework that treats this disagreement as a first-class signal and resolves it through a mathematically grounded coordination layer. Each agent is a machine learning (ML)-based detector that produces a normalized anomaly score, confidence, and structured evidence, augmented by a large language model (LLM)-based critic. A coordinator converts these messages into bounded per-agent losses and updates agent influence via an exponentiated-gradient rule, yielding both a final debated anomaly score and an auditable debate trace. 
MAD is a unified agentic framework that can recover existing approaches, such as mixture-of-experts gating and learning-with-expert-advice aggregation, by restricting the message space and synthesis operator. 
We establish regret guarantees for the synthesized losses and show how conformal calibration can wrap the debated score to control false positives under exchangeability. Experiments on diverse tabular anomaly benchmarks show improved robustness over baselines and clearer traces of model disagreement.
\end{abstract}

\section{Introduction}
Tabular anomaly detection underpins high-stakes monitoring in finance, healthcare, security, and operations, yet it remains fragile in practice. The challenge is not only the heterogeneity of tabular features, but also the setting itself: anomalies are rare, labels are limited or noisy, and deployment routinely faces missingness and distribution shift \citep{han2022adbench}. As a result, no single detector family is consistently reliable. Existing methods, such as tree ensembles, deep tabular models, and tabular foundation models, can each dominate on different datasets, and they often disagree on the same example \citep{chen2016xgboost,ke2017lightgbm,prokhorenkova2018catboost,grinsztajn2022trees,tabpfn,tabpfnv2}.


We argue that this disagreement is not a nuisance to be averaged away; it is a \emph{signal} about ambiguity and failure modes. Leveraging ensemble/committee disagreement as a signal has been explored in other anomaly/novelty detection settings (e.g., semi-supervised novelty detection and medical image anomaly detection) and in contextual anomaly detection via committee-based disagreement measures \citep{tifrea2022ssnd,gu2024d2ue,calikus2022wiscon}. Standard practice in \emph{tabular} anomaly detection, however, still largely either (i) picks one model and hopes it generalizes, or (ii) uses static ensembling that smooths away conflicts without explaining them. This is especially problematic for rare-event metrics and shift-heavy benchmarks, where the right action is often to \emph{resolve} conflicts rather than suppress them.


Inspired by this observation, our idea is simple: when detectors disagree, we should not just average their scores. Instead, we let each detector justify its score (with a calibrated confidence and optional evidence), and a coordinator learns to trust some detectors more than others, especially in the contentious cases where disagreement is high. The result is both a single anomaly score and a transparent trace that records how disagreements were resolved; we introduce the formal agents-and-updates mechanism later. To ground this, we conduct a preliminary study across heterogeneous model families and quantify inter-model disagreement using rank-normalized anomaly scores. Disagreement patterns differ substantially by family, and the benefit of dispute-aware coordination depends on the disagreement regime. These observations motivate a coordination layer that explicitly models ``who disagrees with whom and why,'' rather than treating disagreement as noise to be averaged away.

\begin{figure*}[t]
  \centering
  \includegraphics[width=\textwidth, trim=20 50 20 40, clip]{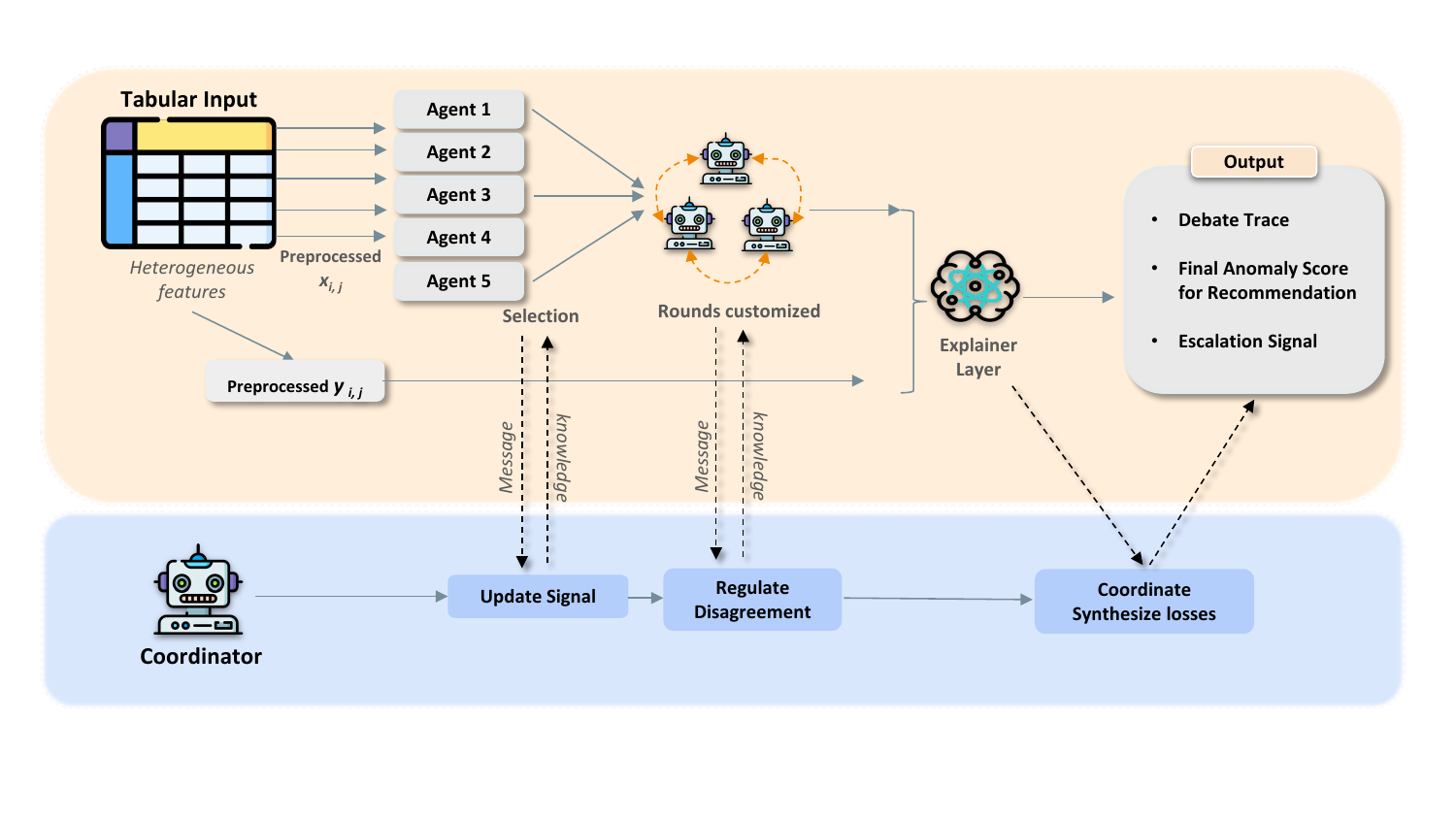}
  \caption{\textbf{MAD Design Overview}
can be decomposed into four modular building blocks: Perception(From input to agent selection), Action(agent debate), Coordinator, and Output. It separates signal extraction, agent interaction, coordination logic, and decision reporting.}
  \label{fig:mad-design}
\end{figure*}


We therefore propose \textbf{MAD}, a multi-agent debating framework for tabular anomaly detection, as shown in Figure~\ref{fig:mad-design}. MAD contains multiple detectors, and each detector is an \emph{ML-based agent} that emits a normalized score along with confidence and optional structured evidence (e.g., feature attributions or counterfactual cues). An optional LLM critic is used only to verify evidence consistency. A coordinator then compares these messages, down-weights detectors whose high-confidence claims are poorly supported, and produces both a final anomaly score and an auditable debate trace. Conceptually, MAD instantiates a broader \emph{superset} view: by restricting the message space and the coordination rule, the same framework recovers standard ensembles (e.g., bagging/stacking) \citep{breiman1996bagging,wolpert1992stacking}, mixture-of-experts-style weighting \citep{jacobs1991moe,shazeer2017moe}, and learning-with-expert-advice aggregation \citep{freund1997decision,cesabianchi2006prediction,shalevshwartz2012online} as special cases. Importantly, the novelty lies in the message schema and in how disagreement and evidence are converted into actionable feedback for coordination.

The main contributions of this work are as follows:
\vspace{-2mm}
\begin{itemize}\itemsep0em
\item \textbf{A Unified Agentic Framework.} Inspired by the observations on structured disagreement from heterogeneous tabular detectors, we design a unifed agentic framework for tabular anomaly detection, MAD, which leverages multiple agents and a coordinator to infer anomaly scores and provide debate trace for better interpretability.  

\item \textbf{Theoretical Justifications.} We formalize debate-augmented anomaly detection as an instance of expert aggregation with disagreement, showing that standard ensembles arise as a degenerate single-round case. We further provide a unifying framework that connects classical learning-theoretic guarantees for expert advice to multi-round agent interactions, clarifying when and why debate improves robustness beyond averaging.

\item \textbf{Extensive Evaluations and Case Studies.} We evaluate MAD on diverse tabular anomaly detection benchmarks and show consistent improvements over strong classical, deep, AutoML, and unsupervised baselines. We further present trace-based case studies that demonstrate how agent disagreement yields interpretable evidence supporting calibrated, human-in-the-loop decisions.
  
  
\end{itemize}

\section{Related Work}
\noindent\textbf{Tabular learning and model families.}
Tree ensembles are commonly strong tabular baselines due to robustness and favorable bias--variance tradeoffs \citep{chen2016xgboost,ke2017lightgbm,prokhorenkova2018catboost,grinsztajn2022trees}. Deep tabular models narrow the gap by adding inductive bias and improved training protocols, including attentive feature selection, differentiable tree-like layers, and Transformer variants \citep{arik2021tabnet,popov2020node,huang2020tabtransformer,gorishniy2021rtdl}, with recent retrieval- and ensembling-style improvements \citep{gorishniy2024tabr,gorishniy2025tabm}. In parallel, prior-data fitted networks and tabular foundation models perform amortized inference over task distributions and can be competitive in small-data regimes \citep{hollmann2023tabpfn,hollmann2025nature,tabpfn,tabpfnv2,liu2025tabpfnunleashed}. Our work does not advocate a single family; it treats heterogeneity (and the disagreements it induces) as an input to coordination.

\noindent\textbf{Tabular anomaly detection and explanation.}
Classic unsupervised detectors include Isolation Forest, LOF, and one-class methods \citep{liu2008iforest,breunig2000lof,scholkopf2001ocsvm}, alongside deep objectives such as Deep SVDD and density--reconstruction hybrids \citep{ruff2018deepsvdd,zong2018dagmm}. Benchmarks highlight large cross-dataset variance and sensitivity to corruption and shift, motivating standardized evaluation \citep{han2022adbench}. Because anomaly decisions are operationally consequential, practitioners commonly attach post hoc explanations (e.g., SHAP/LIME) and counterfactual-style diagnostics \citep{lundberg2017shap,ribeiro2016lime,wachter2017counterfactual}. MAD, by design, makes disagreement and justification part of the decision process.

\noindent\textbf{From aggregation to deliberation.}
Model combination has a long history in bagging/boosting and in online learning with expert advice \citep{cesabianchi2006prediction,shalevshwartz2012online}, and modern systems also use conditional weighting via mixture-of-experts \citep{shazeer2017moe}. Separately, LLM research explores deliberation via search, reflection, and debate protocols \citep{yao2023treeofthoughts,shinn2023reflexion,du2023debate}. MAD sits between these threads: it keeps the \emph{predictive core} as heterogeneous ML detectors, but introduces a debate-style \emph{dispute-resolution layer} with bounded feedback so that coordination is auditable and compatible with standard aggregation guarantees.

\section{Preliminaries and Positioning}\label{sec:prelim}


Tabular anomaly detection is rarely dominated by a single model family: tree-based detectors, deep tabular models, and tabular foundation models can each win on different datasets and can disagree sharply under distribution shift, missingness, and rare anomaly types \citep{han2022adbench,liu2025tabpfnunleashed}.
Averaging scores often hides these disagreements rather than resolving them.
Our goal is a \emph{dispute-resolution layer} that (i) combines heterogeneous ML agents with clean guarantees, and
(ii) exposes \emph{why} a final decision was made. A consolidated list of notations and definitions used throughout this paper is provided in Appendix~\ref{sec:appendix_notation}.

\noindent\textbf{Backbone: learning with expert advice.}
We adopt the prediction-with-expert-advice framework
\citep{freund1997decision,cesabianchi2006prediction,shalevshwartz2012online,hazan2016oco}.
At each round $t$ we maintain weights $\alpha^{(t)}\in\Delta^{N-1}$ over $N$ experts (here: anomaly agents).
Given a bounded loss vector $\ell^{(t)}\in[0,1]^N$, we update the weights using exponentiated gradient
(multiplicative weights) \citep{kivinen1997eg,arora2012mw}:
\begin{equation}\label{eq:eg_prelim}
\alpha_i^{(t+1)}=\frac{\alpha_i^{(t)}\exp(-\eta\,\ell_i^{(t)})}{\sum_{j=1}^N \alpha_j^{(t)}\exp(-\eta\,\ell_j^{(t)})}.
\end{equation}
This backbone is intentionally conservative: it is widely understood, easy to implement, and comes with a regret
bound (over the provided loss sequence). The central design question becomes: \emph{how should we define the per-agent losses $\ell_i^{(t)}$ when
agents disagree?} MAD answers this by turning disagreement and evidence into bounded losses.

\begin{theorem}[Hedge/EG regret]\label{thm:hedge_regret}
Assume $\ell_i^{(t)}\in[0,1]$ for all $i,t$ and initialize $\alpha^{(1)}$ uniformly.
Let $\alpha^{(t)}$ be updated by \eqref{eq:eg_prelim} with $\eta\in(0,1]$. Then
\begin{equation}\label{eq:hedge_regret}
\sum_{t=1}^T\langle \alpha^{(t)},\ell^{(t)}\rangle-\min_{i\in[N]}\sum_{t=1}^T \ell_i^{(t)}
\le \frac{\log N}{\eta}+\frac{\eta T}{8}.
\end{equation}
\end{theorem}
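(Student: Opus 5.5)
We will use the standard potential-function argument for Hedge. Define unnormalized weights $w_i^{(1)}=1$ and $w_i^{(t+1)}=w_i^{(t)}\exp(-\eta\,\ell_i^{(t)})$, so that $\alpha^{(t)}=w^{(t)}/W_t$ with $W_t=\sum_{j=1}^N w_j^{(t)}$. One checks immediately that this reproduces the normalized recursion \eqref{eq:eg_prelim} from a uniform start. The proof then sandwiches $\log(W_{T+1}/W_1)$ between a lower bound given by the best expert and an upper bound given by the algorithm's cumulative loss.

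For the lower bound, since all weights are nonnegative, for every fixed $i$
\begin{equation*}
\log\frac{W_{T+1}}{W_1}\ \ge\ \log\frac{w_i^{(T+1)}}{N}\ =\ -\eta\sum_{t=1}^T\ell_i^{(t)}-\log N .
\end{equation*}
For the upper bound, write each ratio as
\begin{equation*}
\frac{W_{t+1}}{W_t}=\sum_i\alpha_i^{(t)}e^{-\eta\ell_i^{(t)}}=\mathbb{E}_{I\sim\alpha^{(t)}}\bigl[e^{-\eta X}\bigr],
\qquad X=\ell_I^{(t)}\in[0,1].
\end{equation*}
We apply Hoeffding's lemma to this random variable. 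It states that a random variable supported in $[a,b]$ satisfies $\log\mathbb{E}\,e^{sX}\le s\,\mathbb{E}X+s^2(b-a)^2/8$. With $s=-\eta$ and $[a,b]=[0,1]$ this gives
\begin{equation*}
\log\frac{W_{t+1}}{W_t}\le-\eta\langle\alpha^{(t)},\ell^{(t)}\rangle+\frac{\eta^2}{8}.
\end{equation*}
Summing over $t=1,\dots,T$ telescopes to $\log(W_{T+1}/W_1)\le-\eta\sum_t\langle\alpha^{(t)},\ell^{(t)}\rangle+\eta^2T/8$.

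To finish, we combine the two bounds for the minimizing $i$, divide by $\eta>0$, and rearrange. This yields exactly \eqref{eq:hedge_regret}. The restriction $\eta\in(0,1]$ is not actually needed by this argument; only $\eta>0$ is used. Boundedness of the losses in $[0,1]$ is what makes the constant $1/8$ appear.

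The only nontrivial step is the Hoeffding-lemma bound. It is standard and can be cited (e.g.\ \citet{cesabianchi2006prediction}). If a self-contained argument is preferred, we would prove it by convexity. The bound $e^{-\eta x}\le 1-x+xe^{-\eta}$ on $[0,1]$ reduces the claim to showing $\psi(\eta)=\log(1-p+pe^{-\eta})+\eta p\le\eta^2/8$, where $p=\mathbb{E}X$. We would check this by noting $\psi(0)=\psi'(0)=0$ and $\psi''\le 1/4$, since $\psi''$ is the variance of a Bernoulli variable. Everything else is routine bookkeeping.
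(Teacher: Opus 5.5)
Your proof is correct, and it takes a different route from the paper's. The lower bound through a fixed expert is the same in both; the difference is the per-round bound on $\log(W_{t+1}/W_t)$, which the paper writes as $\log Z_t$.

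The paper uses the second-order inequality $e^{-x}\le 1-x+x^2/2$, collapses $(\ell_j^{(t)})^2\le \ell_j^{(t)}$, and applies $\log(1-u)\le -u$ to get $\log Z_t\le -\eta(1-\eta/2)\langle\alpha^{(t)},\ell^{(t)}\rangle$. That route naturally gives a multiplicative, small-loss bound $\sum_t\langle\alpha^{(t)},\ell^{(t)}\rangle\le \frac{\log N+\eta L^\star}{\eta(1-\eta/2)}$, where $L^\star=\min_i\sum_t \ell_i^{(t)}$. This bound is sharper when the best expert's loss is small, but it does not yield the additive form with constant $1/8$. In the paper's write-up, the division by $\eta(1-\eta/2)$ silently drops the factor $1/(1-\eta/2)$ that multiplies $L^\star$, and the closing ``simplifying'' step is not justified. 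Completed correctly, that chain gives at best $\frac{\log N}{\eta}+\log N+\eta L^\star$, or $\frac{\log N}{\eta}+\frac{\eta T}{2}$ if one skips the $\ell^2\le\ell$ step.

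The $\eta^2/8$ per-round term really does require a Hoeffding-type bound on the log-moment generating function of a $[0,1]$-valued variable, which is exactly what you use. Your convexity argument, with $\psi''=q(1-q)\le 1/4$, makes that step self-contained. So your route is the one that actually establishes the inequality as stated. As you observe, it also shows the restriction $\eta\le 1$ is unnecessary; the paper invokes $\eta\le1$ for its $1/(1-\eta/2)\le 1+\eta$ step.
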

\noindent The complete derivation is standard and included in Appendix~\ref{sec:appendix_proofs}.

\medskip
\noindent\textbf{Superset viewpoint.}
Many existing aggregation schemes can be described as: agents emit outputs; a coordinator combines them.
Our framework makes this explicit by introducing a \emph{message space} and a \emph{loss-synthesis operator}.
At each round, a coordinator receives messages $\{m_i^{(t)}\}_{i=1}^N$ and produces a bounded loss vector
$\widehat{\ell}^{(t)}\in[0,1]^N$:
\begin{equation}\label{eq:psi_prelim_main}
(\widehat{\ell}^{(t)},k^{(t+1)})=\Psi\!\left(k^{(t)},\{m_i^{(t)}\}_{i=1}^N,x_t\right).
\end{equation}
This loss vector is then fed into EG \eqref{eq:eg_prelim}. When messages are restricted to scalar predictions and $\Psi$
ignores interaction, this recovers classical one-shot ensembles and expert aggregation; when messages include structured
evidence and $\Psi$ encodes dispute resolution, we obtain debate-augmented aggregation. A formal embedding of common
frameworks and the full ``superset tuple'' are provided in Appendix~\ref{sec:appendix_framework}.

\medskip
\noindent\textbf{What is different from existing aggregation frameworks.}
Most existing aggregation methods can be summarized as ``combine predictions,'' differing mainly in \emph{when} weights
are chosen and \emph{what information} is used. MAD changes the interface: it does not only aggregate scalar outputs,
but introduces a message $m_i^{(t)}$ and a loss-synthesis step $\Psi$ that converts disagreement and evidence into
bounded feedback. This creates three differences.

\begin{table}
\centering
\small
\label{tab:conceptual-summary}
\vspace{-2pt}
\[
\setlength{\arraycolsep}{7pt}
\renewcommand{\arraystretch}{1.10}
\begin{array}{l|c|c}
\hline
\textbf{Framework} & \textbf{Interact?} & \textbf{Learned?} \\
\hline
\text{Bagging / RF} & \xmark & \xmark \\
\text{Boosting} & \xmark & \xmark \\
\text{Mixture of Experts} & \xmark & \xmark \\
\text{LLM debate} & \cmark & \xmark \\
\text{MAD (ours)} & \cmark & \cmark \\
\hline
\end{array}
\]
\caption{Major Differences with Other Frameworks.}
\end{table}

\noindent\textbf{(i) From prediction-only to message-based coordination.}
Classical ensembles typically consume only a scalar score from each model.
MAD consumes $m_i^{(t)}=(\tilde s_i^{(t)},c_i^{(t)},e_i^{(t)})$, so the coordinator can penalize not only inaccurate
scores but also unjustified or inconsistent evidence.

\smallskip
\noindent\textbf{(ii) From fixed or black-box weighting to auditable dispute resolution.}
In many ensembles, the final score is $\hat s=\sum_i \alpha_i \tilde s_i$ with weights chosen by heuristics, training,
or performance history, but the reason for a particular weighting is not explicit.
MAD produces a bounded loss vector $\widehat{\ell}^{(t)}$ via $\Psi$ and updates weights with EG; the debate trace
records which disagreement or evidence inconsistency caused an agent to be down-weighted.

\smallskip
\noindent\textbf{(iii) Superset view with strict special cases.}
If we restrict the message space to scalar scores ($\mathcal{M}=[0,1]$) and let $\Psi$ ignore interaction, MAD reduces
to standard ensembles or classical expert advice. MAD is therefore a strict superset: it contains existing methods as
special cases and extends them by allowing structured communication. If false-positive control is required, conformal calibration can wrap any final score into a p-value under exchangeability \citep{vovk2005alrw}. We apply this to MAD as an optional post-processing step; details are in Appendix~\ref{sec:appendix_conformal}.

\section{MAD for Tabular Anomaly Detection}\label{sec:method}
\begin{algorithm}[t]
\caption{MAD: Multi-Agent Debate with EG Aggregation}\label{alg:mad_main}
\begin{algorithmic}[1]
\REQUIRE Agents $\{s_i,\nu_i\}_{i=1}^N$, rounds $T$, stepsize $\eta$, synthesis operator $\Psi$
\STATE Initialize weights $\alpha^{(1)} \leftarrow (1/N,\dots,1/N)$
\FOR{each input $x$ (or each round $t$ in a stream)}
  \STATE Initialize state $k^{(1)} \leftarrow \alpha^{(1)}$ (plus bookkeeping for trace)
  \FOR{$t=1$ \textbf{to} $T$}
    \FOR{$i=1$ \textbf{to} $N$}
      \STATE Compute normalized score $\tilde s_i^{(t)}(x)\leftarrow \nu_i(s_i(x))$
      \STATE Compute confidence $c_i^{(t)}(x)$ and evidence $e_i^{(t)}(x)$
      \STATE Form message $m_i^{(t)}(x)\leftarrow(\tilde s_i^{(t)}(x),c_i^{(t)}(x),e_i^{(t)}(x))$
    \ENDFOR
    \STATE $(\widehat{\ell}^{(t)},k^{(t+1)})\leftarrow \Psi(k^{(t)},\{m_i^{(t)}(x)\}_{i=1}^N,x)$
    \STATE Update weights $\alpha^{(t+1)}$ using \eqref{eq:mad_eg_main}
  \ENDFOR
  \STATE Output $\hat s(x)\leftarrow \sum_i \alpha_i^{(T)}\tilde s_i^{(T)}(x)$ and store trace
\ENDFOR
\end{algorithmic}
\end{algorithm}

MAD treats each anomaly detector as an agent and turns disagreement into a structured debate.
Each agent produces (i) a score, (ii) a confidence, and (iii) evidence (for example, feature attributions or
counterfactual cues). A coordinator converts these messages into bounded per-agent losses and updates agent weights
using exponentiated gradient (EG). The output is both a final anomaly score and an auditable trace showing which
agents were trusted and why.

\medskip
\noindent\textbf{Agents and normalized scores.}
Agent $i$ outputs a raw anomaly score $s_i:\mathcal{X}\to\mathbb{R}$.
To make scores comparable across heterogeneous families, we use a monotone map $\nu_i:\mathbb{R}\to[0,1]$ and define
\begin{equation}\label{eq:norm_score_main}
\tilde s_i(x) \triangleq \nu_i(s_i(x))\in[0,1].
\end{equation}
This step removes scale effects so that aggregation and disagreement are meaningful across agents.

\medskip
\noindent\textbf{Debate as loss synthesis.}
MAD introduces a synthesis operator
\begin{equation}\label{eq:mad_psi_main}
(\widehat{\ell}^{(t)},k^{(t+1)})=\Psi\!\left(k^{(t)},\{m_i^{(t)}(x)\}_{i=1}^N,x\right),
\widehat{\ell}^{(t)}\in[0,1]^N
\end{equation}
which is where ``debate'' lives. The role of $\Psi$ is to convert disagreement and evidence into bounded feedback that
down-weights unreliable or unjustified agents while keeping the prerequisites required for EG theory.

\medskip
\noindent\textbf{Messages.}
At round $t$, agent $i$ emits a message
\begin{equation}\label{eq:mad_message_main}
m_i^{(t)}(x)\triangleq \big(\tilde s_i^{(t)}(x),\; c_i^{(t)}(x),\; e_i^{(t)}(x)\big)\in\mathcal{M},
\end{equation}
where $c_i^{(t)}(x)\in[0,1]$ is confidence and $e_i^{(t)}(x)$ is structured evidence. Evidence can be produced by ML
explainers (for example feature attributions) and can optionally be augmented by an LLM critic that checks consistency
or highlights conflicts; MAD does not require the LLM to generate the anomaly score.

\medskip
\noindent\textbf{Aggregation and update.}
MAD maintains weights $\alpha^{(t)}\in\Delta^{N-1}$ and forms the debated aggregate score
\begin{equation}\label{eq:mad_score_main}
\hat s^{(t)}(x)\triangleq \sum_{i=1}^N \alpha_i^{(t)}\,\tilde s_i^{(t)}(x).
\end{equation}
Weights are updated using synthesized losses:
\begin{equation}\label{eq:mad_eg_main}
\alpha_i^{(t+1)}
=
\frac{\alpha_i^{(t)}\exp\!\big(-\eta\,\widehat{\ell}_i^{(t)}\big)}
{\sum_{j=1}^N \alpha_j^{(t)}\exp\!\big(-\eta\,\widehat{\ell}_j^{(t)}\big)}.
\end{equation}
MAD outputs the final score $\hat s(x)\triangleq \hat s^{(T)}(x)$ and a trace of the debate history (messages, losses,
and weight evolution). In practice, $T=1$ is a strong default and $T>1$ is evaluated as an explicit extension.

\medskip
\noindent\textbf{Why confidence and evidence are necessary.}
Score disagreement alone cannot distinguish ``a confident but wrong agent'' from ``a cautious agent with weak signal.''
Confidence $c_i^{(t)}$ and evidence $e_i^{(t)}$ allow $\Psi$ to enforce a simple principle: if an agent disagrees
strongly, it should either provide consistent evidence or be penalized. This is what makes MAD a dispute-resolution
mechanism rather than a reweighting heuristic.

\medskip
Because MAD enforces $\widehat{\ell}_i^{(t)}\in[0,1]$, the aggregation step inherits the standard expert-advice regret
bound (Theorem~\ref{thm:hedge_regret}) with respect to the synthesized losses (not necessarily with respect to a downstream detection metric).

\begin{theorem}[MAD regret]\label{thm:mad_regret_main}
Assume $\widehat{\ell}_i^{(t)}\in[0,1]$ for all $i,t$, where $\widehat{\ell}^{(t)}$ is produced by \eqref{eq:mad_psi_main}.
If MAD updates $\alpha^{(t)}$ by \eqref{eq:mad_eg_main}, then
\[
\sum_{t=1}^T\langle \alpha^{(t)},\widehat{\ell}^{(t)}\rangle
-\min_{i\in[N]}\sum_{t=1}^T \widehat{\ell}_i^{(t)}
\le \frac{\log N}{\eta}+\frac{\eta T}{8}.
\]
\end{theorem}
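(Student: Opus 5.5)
The plan is to reduce Theorem~\ref{thm:mad_regret_main} to Theorem~\ref{thm:hedge_regret}. The update \eqref{eq:mad_eg_main} is literally \eqref{eq:eg_prelim} applied to the sequence $\ell^{(t)}:=\widehat{\ell}^{(t)}$. By hypothesis, $\widehat{\ell}^{(t)}\in[0,1]^N$. Algorithm~\ref{alg:mad_main} initializes $\alpha^{(1)}$ uniformly, and we take $\eta\in(0,1]$ as in Theorem~\ref{thm:hedge_regret}. So every hypothesis of that theorem is met, and the bound follows verbatim.

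The one point to check is that Theorem~\ref{thm:hedge_regret} holds for \emph{adaptively generated} losses. Here $\widehat{\ell}^{(t)}$ is produced by $\Psi$ from the state $k^{(t)}$, which may include $\alpha^{(t)}$, and from messages that may depend on past weights. I would argue that the Hedge analysis is deterministic and pathwise. It uses only that each $\ell^{(t)}$ is fixed before $\alpha^{(t+1)}$ is formed, and never that the losses are oblivious.

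For completeness, I would sketch the potential argument. Set $W_t=\sum_i w_i^{(t)}$ with $w_i^{(t)}=\exp(-\eta\sum_{s<t}\widehat{\ell}_i^{(s)})$, so that $\alpha^{(t)}=w^{(t)}/W_t$ and $W_1=N$.
\begin{enumerate}
\item \emph{Lower bound.} $\log(W_{T+1}/W_1)\ge -\eta\min_i\sum_t\widehat{\ell}_i^{(t)}-\log N$.
\item \emph{Per-round upper bound.} $\log(W_{t+1}/W_t)=\log\mathbb{E}_{i\sim\alpha^{(t)}}e^{-\eta\widehat{\ell}_i^{(t)}}$. Hoeffding's lemma for a $[0,1]$-valued variable gives that this is at most $-\eta\langle\alpha^{(t)},\widehat{\ell}^{(t)}\rangle+\eta^2/8$.
\item \emph{Combine.} Sum the upper bound over $t$, compare with the lower bound, and divide by $\eta$ to obtain $\log N/\eta+\eta T/8$.
\end{enumerate}
Step 2 is pathwise: conditioning on the realized $\widehat{\ell}^{(t)}$ makes it a fixed vector. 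Adaptivity is therefore harmless.

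The main obstacle is not technical. It is interpretive, namely making precise what ``round'' means in Algorithm~\ref{alg:mad_main}. The inner loop re-initializes $\alpha$ per input, so the bound applies to each debate sequence of length $T$ (or to the stream, if the weights are carried across inputs). I would state the result for any sequence on which \eqref{eq:mad_eg_main} is iterated from a uniform start. Should the implementation start from a nonuniform $\alpha^{(1)}$, the $\log N$ term would be replaced by $\log(1/\alpha^{(1)}_{i^\star})$.
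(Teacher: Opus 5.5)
Your proposal matches the paper's proof, which is the same one-line reduction. Since $\widehat{\ell}^{(t)}\in[0,1]^N$ and \eqref{eq:mad_eg_main} is \eqref{eq:eg_prelim} with $\ell^{(t)}=\widehat{\ell}^{(t)}$, Theorem~\ref{thm:hedge_regret} applies, with the uniform start taken from Algorithm~\ref{alg:mad_main}. Your Hoeffding-lemma sketch should be kept rather than treated as optional, and your pathwise/adaptivity and re-initialization caveats make explicit what the paper leaves implicit. The sketch holds for every $\eta>0$, so it removes the need for $\eta\le 1$, which Theorem~\ref{thm:mad_regret_main} never states. More importantly, it is what actually produces the $\eta T/8$ term. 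The paper's appendix derivation of Theorem~\ref{thm:hedge_regret} uses $e^{-\eta u}\le 1-\eta u+\eta^2u^2/2$ and $u^2\le u$, which yield only $\sum_t\langle\alpha^{(t)},\ell^{(t)}\rangle\le \frac{\log N}{\eta(1-\eta/2)}+\frac{1}{1-\eta/2}\min_i\sum_t\ell_i^{(t)}$ and do not establish the stated constant.
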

\noindent Full derivations and additional lemmas are in Appendix~\ref{sec:appendix_proofs}.

\begin{table*}[t]
\vskip 0.06in
\caption{
Popular tabular datasets for anomaly and rare-event detection across domains.
Pos.\% denotes minority/positive (anomaly) rate after binarization.
Binary datasets treat the rarer class as an anomaly; multiclass datasets use one-vs-rest (OvR).
}
\label{tab:datasets}
\centering
\scriptsize
\setlength{\tabcolsep}{2.8pt}
\renewcommand{\arraystretch}{0.8}
\begin{tabular*}{\textwidth}{@{\extracolsep{\fill}} c l l l r r l @{}}
\toprule
\textbf{} & \textbf{Dataset} & \textbf{Domain} & \textbf{Task} & \textbf{Rows} & \textbf{Feat.} & \textbf{Pos.\%} \\
\midrule
\multirow{3}{*}{\rotfam{Finance}}
& Credit Card Fraud & Finance / Fraud & Anomaly       & 284{,}807 & 30   & $\approx$0.17\% \\
& IEEE--CIS Fraud                 & Finance / Fraud & Anomaly       & 590K+     & 400+ & $<1\%$ \\
& Give Me Some Credit             & Consumer Risk   & Rare-event    & 150{,}000 & 10   & $\approx$6\% \\
\midrule
\multirow{3}{*}{\rotfam{Health}}
& Mammography          & Healthcare / Biology & Anomaly       & 11{,}183 & 6   & $\approx$2--3\% \\
& Arrhythmia             & Healthcare / Biology & Anomaly (OvR) & 452      & 279 & rare \\
& Thyroid (ANN)     & Healthcare / Biology & Anomaly       & 7{,}200  & 21  & rare \\
\midrule
\multirow{2}{*}{\rotfam{Eco}}
& Shuttle          & Ops / Physics   & Anomaly (OvR) & 58{,}000  & 9   & rare \\
& CoverType                   & Ecology         & Anomaly (OvR) & 581{,}012 & 54  & rare \\
\midrule
\multirow{3}{*}{\rotfam{Cyber}}
& UNSW-NB15                        & Cybersecurity   & Intrusion     & 2.5M      & 49  & rare \\
& NSL-KDD                          & Cybersecurity   & Intrusion     & 125{,}973 & 41  & rare \\
& CIC-IDS2017                      & Cybersecurity   & Intrusion     & 2--3M     & 75+ & rare \\
\midrule
\multirow{2}{*}{\rotfam{Econ}}
& Bank Marketing      & Business / Marketing & Rare-event & 45{,}211  & 16  & $\approx$11\% \\
& Adult Income                     & Socioeconomic        & Rare-event & 48{,}842  & 14  & $\approx$24\% \\
\midrule
\multirow{2}{*}{\rotfam{Edu}}
& Student Performance (POR)  & Education & Rare-event & 649      & 33 & $\approx$15\% \\
& Academic Performance       & Education & Rare-event & 4{,}800  & 37 & $\approx$12\% \\
\bottomrule
\end{tabular*}
\vskip -0.08in
\end{table*}

\begin{table*}
\vskip 0.06in
\caption{Overall performance (mean $\pm$ std across datasets). $\uparrow$ higher is better; $\downarrow$ lower is better.}
\label{tab:accuracy}
\centering
\scriptsize
\setlength{\tabcolsep}{2.6pt}
\renewcommand{\arraystretch}{0.90}
\begin{tabular*}{\textwidth}{@{\extracolsep{\fill}} c l c c c c c c @{}}
\toprule
& \textbf{Model} &
\textbf{PR} $\uparrow$ &
\textbf{R@1\%FPR} $\uparrow$ &
\textbf{ROC} $\uparrow$ &
\textbf{F1} $\uparrow$ &
\textbf{ECE} $\downarrow$ &
\textbf{Gap} $\downarrow$ \\
\midrule

\multirow{6}{*}{\rotfam{Classical}}
& GLM (LogReg) & \pmstd{0.232}{0.112} & \pmstd{0.401}{0.147} & \pmstd{0.861}{0.043} & \pmstd{0.817}{0.058} & \pmstd{8.3}{3.9} & \pmstd{0.104}{0.061} \\
& RF          & \pmstd{0.267}{0.091} & \pmstd{0.442}{0.118} & \pmstd{0.878}{0.036} & \pmstd{0.834}{0.047} & \pmstd{7.1}{2.8} & \pmstd{0.091}{0.047} \\
& XGB         & \second{\pmstd{0.304}{0.082}} & \second{\pmstd{0.486}{0.105}} & \second{\pmstd{0.899}{0.028}} & \second{\pmstd{0.856}{0.038}} & \pmstd{6.4}{2.2} & \pmstd{0.079}{0.039} \\
& LGBM        & \pmstd{0.296}{0.087} & \pmstd{0.471}{0.121} & \pmstd{0.895}{0.032} & \pmstd{0.852}{0.044} & \pmstd{6.8}{2.6} & \pmstd{0.086}{0.043} \\
& CatBoost    & \pmstd{0.289}{0.095} & \pmstd{0.462}{0.129} & \pmstd{0.897}{0.031} & \pmstd{0.850}{0.043} & \pmstd{6.0}{2.5} & \pmstd{0.072}{0.035} \\
& HeteroStack & \pmstd{0.312}{0.076} & \pmstd{0.498}{0.097} & \pmstd{0.905}{0.024} & \pmstd{0.862}{0.033} & \pmstd{6.2}{2.1} & \pmstd{0.073}{0.033} \\
\midrule

\multirow{3}{*}{\rotfam{AutoML}}
& AutoGluon & \pmstd{0.291}{0.097} & \pmstd{0.470}{0.133} & \pmstd{0.892}{0.041} & \pmstd{0.848}{0.055} & \pmstd{7.4}{3.1} & \pmstd{0.090}{0.052} \\
& H2O      & \pmstd{0.279}{0.103} & \pmstd{0.458}{0.141} & \pmstd{0.887}{0.045} & \pmstd{0.842}{0.060} & \pmstd{6.9}{3.4} & \pmstd{0.083}{0.050} \\
& auto-skl & \pmstd{0.285}{0.090} & \pmstd{0.463}{0.119} & \pmstd{0.889}{0.038} & \pmstd{0.844}{0.050} & \pmstd{7.8}{3.0} & \pmstd{0.098}{0.056} \\
\midrule

\multirow{5}{*}{\rotfam{Deep}}
& TabNet   & \pmstd{0.261}{0.120} & \pmstd{0.433}{0.156} & \pmstd{0.873}{0.052} & \pmstd{0.828}{0.070} & \pmstd{8.6}{4.2} & \pmstd{0.109}{0.066} \\
& FT-Trans & \pmstd{0.300}{0.089} & \pmstd{0.480}{0.127} & \pmstd{0.897}{0.034} & \pmstd{0.852}{0.049} & \pmstd{6.7}{2.7} & \pmstd{0.081}{0.045} \\
& SAINT    & \pmstd{0.307}{0.084} & \pmstd{0.492}{0.116} & \pmstd{0.901}{0.029} & \pmstd{0.857}{0.041} & \pmstd{6.1}{2.3} & \pmstd{0.078}{0.041} \\
& TabTrans & \pmstd{0.282}{0.101} & \pmstd{0.456}{0.138} & \pmstd{0.891}{0.040} & \pmstd{0.846}{0.058} & \pmstd{7.3}{3.3} & \pmstd{0.096}{0.060} \\
& TabPFN   & \pmstd{0.295}{0.078} & \pmstd{0.489}{0.110} & \pmstd{0.904}{0.022} & \pmstd{0.861}{0.031} & \pmstd{6.9}{2.0} & \pmstd{0.092}{0.040} \\
\midrule

\multirow{4}{*}{\rotfam{OD(sk)}}
& iForest     & \pmstd{0.214}{0.141} & \pmstd{0.354}{0.186} & \pmstd{0.781}{0.067} & -- & -- & \pmstd{0.151}{0.094} \\
& OC-SVM      & \pmstd{0.168}{0.118} & \pmstd{0.302}{0.173} & \pmstd{0.747}{0.074} & -- & -- & \pmstd{0.173}{0.102} \\
& LOF         & \pmstd{0.154}{0.109} & \pmstd{0.286}{0.161} & \pmstd{0.734}{0.079} & -- & -- & \pmstd{0.192}{0.110} \\
& Ellip.Env.  & \pmstd{0.131}{0.096} & \pmstd{0.251}{0.148} & \pmstd{0.711}{0.083} & -- & -- & \pmstd{0.208}{0.121} \\
\midrule

\multirow{5}{*}{\rotfam{OD(py)}}
& HBOS  & \pmstd{0.176}{0.126} & \pmstd{0.319}{0.178} & \pmstd{0.756}{0.072} & -- & -- & \pmstd{0.181}{0.103} \\
& COPOD & \pmstd{0.201}{0.148} & \pmstd{0.343}{0.192} & \pmstd{0.772}{0.068} & -- & -- & \pmstd{0.164}{0.097} \\
& ECOD  & \pmstd{0.208}{0.139} & \pmstd{0.349}{0.185} & \pmstd{0.778}{0.066} & -- & -- & \pmstd{0.158}{0.095} \\
& KNN   & \pmstd{0.161}{0.111} & \pmstd{0.292}{0.163} & \pmstd{0.741}{0.076} & -- & -- & \pmstd{0.195}{0.112} \\
& PCA   & \pmstd{0.147}{0.105} & \pmstd{0.271}{0.156} & \pmstd{0.729}{0.081} & -- & -- & \pmstd{0.203}{0.119} \\
\midrule

\rotfam{MAD} & \best{MAD (ours)} &
\best{\pmstd{0.352}{0.071}} &
\best{\pmstd{0.647}{0.094}} &
\best{\pmstd{0.936}{0.015}} &
\best{\pmstd{0.893}{0.026}} &
\best{\pmstd{5.0}{1.5}} &
\best{\pmstd{0.050}{0.024}} \\
\bottomrule
\end{tabular*}

\vspace{2pt}
{\scriptsize \textit{Abbrev.:} XGB=XGBoost, LGBM=LightGBM, HeteroStack=Heterogeneous Stacking, FT-Trans=FT-Transformer, TabTrans=TabTransformer, iForest=Isolation Forest, OC-SVM=One-Class SVM, Ellip.Env.=Elliptic Envelope, auto-skl=auto-sklearn.}
\vskip -0.08in
\end{table*}
\medskip

Algorithm 1 shows the detailed procedures of MAD with EG aggregation, which treats $\Psi$ as a modular component. In addition, Appendix~\ref{sec:appendix_mad_instantiation} specifies one concrete instantiation of $\Psi$ used in our experiments.

\section{Experiments}
\label{sec:experiments}

We evaluate MAD as a debate-augmented instantiation of its math structure for tabular anomaly and rare-event detection. The experiments answer: (i) whether disagreement-aware synthesis improves detection beyond standard ensembling and strong tabular baselines, (ii) whether disagreement predicts where improvements occur, and (iii) which components are responsible for accuracy, reliability, and robustness. More details can be found in appendix~\ref{sec:appendix_exp_details}

\textbf{Datasets, baselines, and protocol.}
We benchmark across diverse domains and imbalance regimes; the dataset suite is summarized in Table~\ref{tab:datasets}.
We draw datasets from OpenML and UCI (and standard intrusion/fraud benchmarks) following their canonical sources
\citep{vanschoren2014openml,dua2019uci,tavallaee2009nslkdd,moustafa2015unsw,sharafaldin2018cicids,dalpozzolo2015ccfraud,kaggle2019ieeecis,kaggle2011givemecredit}.
We compare MAD against representative tabular paradigms:
classical ML (logistic regression; random forests \citep{breiman2001rf}; gradient boosting \citep{friedman2001gbm} via XGBoost \citep{chen2016xgboost}, LightGBM \citep{ke2017lightgbm}, CatBoost \citep{prokhorenkova2018catboost}),
AutoML (AutoGluon \citep{erickson2020autogluon}, H2O AutoML \citep{leDell2020h2oautoml}, auto-sklearn \citep{feurer2015autosklearn}),
deep tabular models (TabNet \citep{arik2021tabnet}, FT-Transformer \citep{gorishniy2021rtdl}, SAINT \citep{somepalli2021saint}, TabTransformer \citep{huang2020tabtransformer}, TabPFN \citep{hollmann2023tabpfn}),
and unsupervised outlier detection (Isolation Forest \citep{liu2008iforest}, One-Class SVM \citep{scholkopf2001ocsvm}, LOF \citep{breunig2000lof}, robust covariance / Elliptic Envelope \citep{rousseeuw1999fastmcd},
and PyOD methods \citep{zhao2019pyod} including HBOS \citep{goldstein2012hbos}, COPOD \citep{li2020copod}, and ECOD \citep{li2022ecod}).
We implement standard baselines using widely used toolkits (scikit-learn \citep{pedregosa2011sklearn} and PyOD \citep{zhao2019pyod}) with shared preprocessing, splits, and tuning budget.
We report PR-AUC and Recall@1\%FPR for rare-event sensitivity, ROC-AUC and macro-F1 for overall discrimination, and ECE \citep{guo2017calibration} plus a slice gap statistic for reliability and robustness. The complete data-preprocessing, split identifiers, and any filtering are provided in the
released code/configs (Appendix~\ref{sec:appendix_code}).

\subsection*{Main Results and Analysis}
\label{subsec:main-results}

\begin{figure*}[t]
\vskip 0.03in
\centering
\includegraphics[width=\textwidth, trim=6 6 6 6, clip]{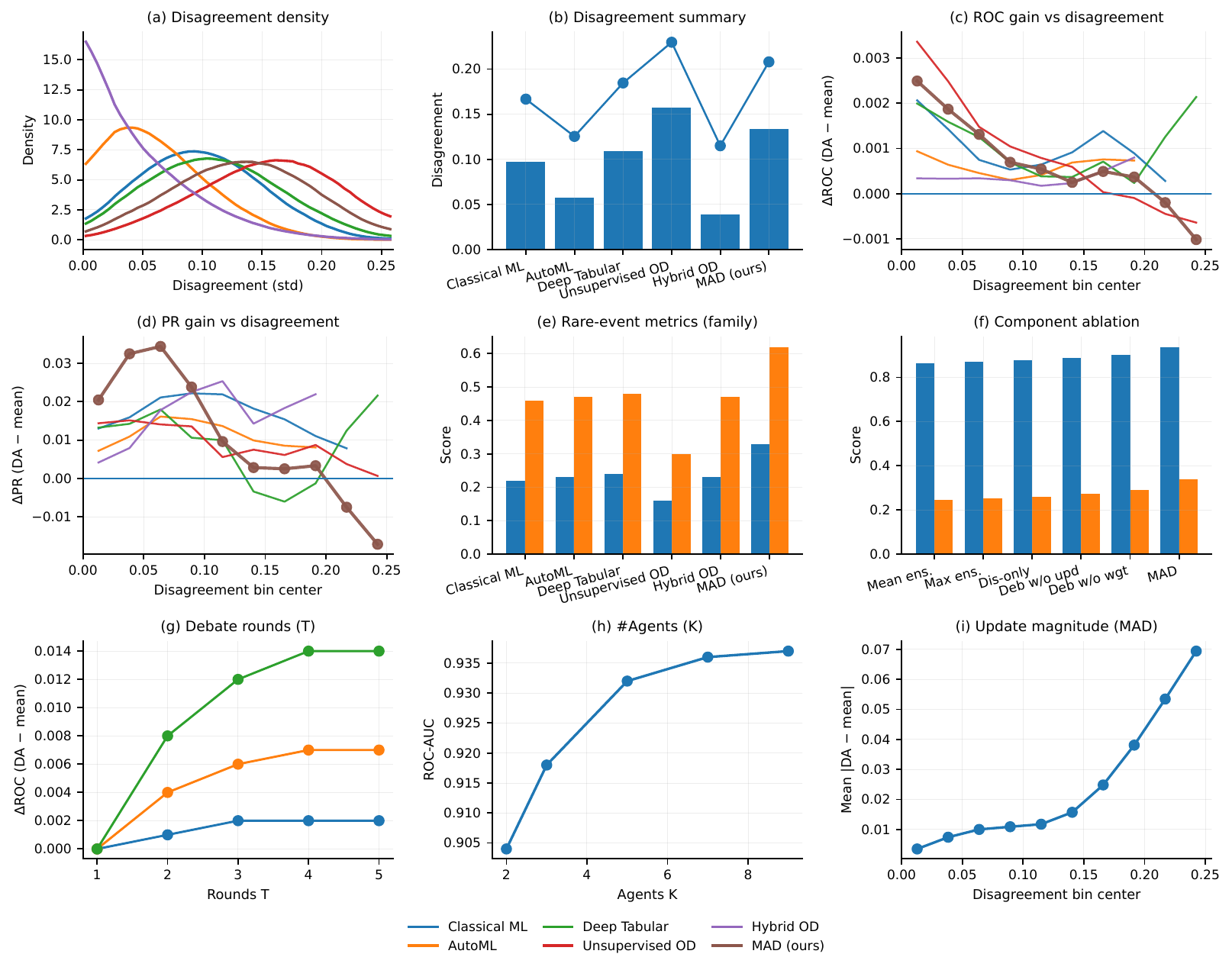}
\caption{\textbf{Results and diagnostics in one view.}
(a) Disagreement density by family. (b) Disagreement summary (median with 90th percentile). (c) $\Delta$ROC (dispute-aware minus mean ensemble) vs.\ disagreement. (d) $\Delta$PR vs.\ disagreement.
(e) Family-level rare-event metrics (PR-AUC, Recall@1\%FPR). (f) Component ablation. (g) Effect of debate rounds $T$ (low/mid/high disagreement). (h) Effect of agent pool size $K$. (i) Mean update magnitude $|\text{DA}-\text{mean}|$ vs.\ disagreement (MAD).}
\label{fig:summary-panels}
\vskip -0.10in
\end{figure*}

\noindent\textbf{Accuracy, calibration, and robustness.}
Table~\ref{tab:accuracy} shows that MAD is consistently competitive at the top across detection metrics while also improving reliability (ECE) and reducing slice disparity (Gap), rather than trading one objective for another. The same conclusion appears in the family aggregation view in Figure~\ref{fig:summary-panels}(e): MAD yields the strongest rare-event operating behavior, especially under strict false-positive budgets where practical screening systems operate. Together with the dataset diversity in Table~\ref{tab:datasets}, this supports that MAD’s gains are not confined to a single domain or a single model family.

\noindent\textbf{Disagreement Diagnostics.}
Figure~\ref{fig:summary-panels}(a,b) shows that disagreement is structured and differs substantially across families, motivating a synthesis rule that reacts to conflict rather than uniformly averaging it away. Figure~\ref{fig:summary-panels}(c,d) then shows that improvements over mean ensembling concentrate in higher-disagreement regimes, while low-disagreement regions see little change, matching the intended “intervene only when needed” behavior of MAD. Consistently, Figure~\ref{fig:summary-panels}(i) shows that the magnitude of MAD’s adjustment tracks disagreement: the debated score stays close to standard aggregation under consensus and departs more when agents conflict.

\noindent\textbf{Ablations Study.} Figure~\ref{fig:summary-panels}(f) and Table~\ref{tab:abl_core} show that removing key debate ingredients (e.g., dropping the update or weighting mechanism, or collapsing to single-channel signals) degrades performance, indicating that MAD’s gains are not explained by ensembling alone. Table~\ref{tab:abl_evid} further shows that both the confidence design and evidence channel matter: weakening evidence or corrupting it reduces the benefits, which aligns with MAD’s goal of using structured support to resolve disagreement rather than simply amplifying variance. Finally, Figures~\ref{fig:summary-panels}(g,h) indicate that MAD benefits from a small number of debate rounds and a moderate agent pool before saturating, suggesting a practical operating point that balances improvements with overhead.

\begin{table*}[t]
\vskip 0.06in
\caption{Core ablations on synthesis/operator choices (mean $\pm$ std across datasets).}
\label{tab:abl_core}
\centering
\scriptsize
\setlength{\tabcolsep}{2.9pt}
\renewcommand{\arraystretch}{0.92}
\begin{tabular*}{\textwidth}{@{\extracolsep{\fill}} c l c c c c @{}}
\toprule
& \textbf{Variant} & \textbf{PR} $\uparrow$ & \textbf{R@1\%} $\uparrow$ & \textbf{ECE} $\downarrow$ & \textbf{Gap} $\downarrow$ \\
\midrule
\multirow{4}{*}{\rotfam{$\nu_i$}}
& Rank/pctl (def.)          & \pmstd{0.352}{0.071} & \pmstd{0.647}{0.094} & \pmstd{5.0}{1.5} & \pmstd{0.050}{0.024} \\
& Min--max                  & \pmstd{0.331}{0.083} & \pmstd{0.616}{0.116} & \pmstd{6.1}{2.1} & \pmstd{0.066}{0.034} \\
& Z $\rightarrow$ sigm.     & \pmstd{0.317}{0.097} & \pmstd{0.589}{0.132} & \pmstd{6.8}{2.7} & \pmstd{0.074}{0.040} \\
& Iso (val-fit)             & \pmstd{0.344}{0.076} & \pmstd{0.635}{0.101} & \pmstd{4.6}{1.4} & \pmstd{0.048}{0.023} \\
\midrule
\multirow{3}{*}{\rotfam{Bound}}
& Clip (def.)               & \pmstd{0.352}{0.071} & \pmstd{0.647}{0.094} & \pmstd{5.0}{1.5} & \pmstd{0.050}{0.024} \\
& Sigmoid link              & \pmstd{0.346}{0.080} & \pmstd{0.636}{0.108} & \pmstd{4.8}{1.6} & \pmstd{0.054}{0.028} \\
& Tanh-rescale              & \pmstd{0.340}{0.086} & \pmstd{0.624}{0.118} & \pmstd{5.3}{1.9} & \pmstd{0.058}{0.030} \\
\midrule
\multirow{5}{*}{\rotfam{Ch.}}
& Pred-only ($\lambda_d{=}0,\lambda_e{=}0$) & \pmstd{0.301}{0.110} & \pmstd{0.563}{0.152} & \pmstd{7.5}{3.6} & \pmstd{0.102}{0.061} \\
& Pred+disp ($\lambda_e{=}0$)               & \pmstd{0.332}{0.094} & \pmstd{0.607}{0.132} & \pmstd{6.3}{2.8} & \pmstd{0.071}{0.045} \\
& Pred+evid ($\lambda_d{=}0$)               & \pmstd{0.323}{0.101} & \pmstd{0.596}{0.144} & \pmstd{5.8}{2.4} & \pmstd{0.079}{0.049} \\
& Disp-only                                 & \pmstd{0.255}{0.121} & \pmstd{0.482}{0.171} & \pmstd{8.7}{4.1} & \pmstd{0.141}{0.083} \\
& Evid-only                                 & \pmstd{0.271}{0.118} & \pmstd{0.507}{0.166} & \pmstd{8.0}{3.7} & \pmstd{0.126}{0.076} \\
\midrule
\rotfam{MAD} & \best{Full (def.)} &
\best{\pmstd{0.352}{0.071}} &
\best{\pmstd{0.647}{0.094}} &
\best{\pmstd{5.0}{1.5}} &
\best{\pmstd{0.050}{0.024}} \\
\bottomrule
\end{tabular*}

\vspace{2pt}
{\scriptsize \textit{Legend:} Bound=bounding/link; Ch.=channels; disp=disagreement; evid=evidence; pctl=percentile; sigm.=sigmoid; Iso=isotonic.}
\vskip -0.08in
\end{table*}

\begin{table*}[t]
\vskip 0.06in
\caption{Ablations on confidence $c_i$ and evidence $e_i$ (mean $\pm$ std across datasets).}
\label{tab:abl_evid}
\centering
\scriptsize
\setlength{\tabcolsep}{2.9pt}
\renewcommand{\arraystretch}{0.92}
\begin{tabular*}{\textwidth}{@{\extracolsep{\fill}} c l c c c c @{}}
\toprule
& \textbf{Variant} & \textbf{PR} $\uparrow$ & \textbf{R@1\%} $\uparrow$ & \textbf{ECE} $\downarrow$ & \textbf{Gap} $\downarrow$ \\
\midrule
\multirow{4}{*}{\rotfam{$c_i$}}
& BootVar (def.)              & \pmstd{0.352}{0.071} & \pmstd{0.647}{0.094} & \pmstd{5.0}{1.5} & \pmstd{0.050}{0.024} \\
& Margin/prob proxy           & \pmstd{0.338}{0.082} & \pmstd{0.621}{0.112} & \pmstd{5.9}{2.1} & \pmstd{0.062}{0.032} \\
& Temp-calibrated             & \pmstd{0.347}{0.078} & \pmstd{0.636}{0.105} & \pmstd{4.4}{1.3} & \pmstd{0.049}{0.024} \\
& Conformal proxy             & \pmstd{0.343}{0.086} & \pmstd{0.628}{0.121} & \pmstd{4.6}{1.5} & \pmstd{0.052}{0.027} \\
\midrule
\multirow{4}{*}{\rotfam{$e_i$}}
& No evidence ($\lambda_e{=}0$) & \pmstd{0.332}{0.094} & \pmstd{0.607}{0.132} & \pmstd{6.3}{2.8} & \pmstd{0.071}{0.045} \\
& $\ell_2$-norm (def.)          & \pmstd{0.352}{0.071} & \pmstd{0.647}{0.094} & \pmstd{5.0}{1.5} & \pmstd{0.050}{0.024} \\
& Top-$k$ sparse ($k{=}10$)     & \pmstd{0.349}{0.077} & \pmstd{0.641}{0.106} & \pmstd{5.2}{1.8} & \pmstd{0.053}{0.027} \\
& Sign-only                     & \pmstd{0.333}{0.091} & \pmstd{0.612}{0.129} & \pmstd{6.0}{2.4} & \pmstd{0.066}{0.039} \\
\midrule
\multirow{3}{*}{\rotfam{Corrupt}}
& 10\% shuffle                 & \pmstd{0.340}{0.089} & \pmstd{0.622}{0.125} & \pmstd{5.8}{2.3} & \pmstd{0.064}{0.038} \\
& 30\% shuffle                 & \pmstd{0.322}{0.103} & \pmstd{0.591}{0.148} & \pmstd{6.6}{2.9} & \pmstd{0.079}{0.048} \\
& 50\% shuffle                 & \pmstd{0.298}{0.118} & \pmstd{0.548}{0.173} & \pmstd{7.9}{3.6} & \pmstd{0.101}{0.061} \\
\midrule
\rotfam{MAD} & \best{Full (def.)} &
\best{\pmstd{0.352}{0.071}} &
\best{\pmstd{0.647}{0.094}} &
\best{\pmstd{5.0}{1.5}} &
\best{\pmstd{0.050}{0.024}} \\
\bottomrule
\end{tabular*}

\vspace{2pt}
{\scriptsize \textit{Legend:} BootVar=bootstrap variance; Temp=temperature; Corrupt=shuffle top-$k$ evidence dimensions.}
\vskip -0.08in
\end{table*}

\subsection*{Operational Implications and Trace-Based Case Study}\label{subsec:casestudy}

\textbf{Operational implications.}
Real-world tabular anomaly detection systems operate under constraints beyond average detection accuracy, including alert budgets, analyst review capacity, and the need to justify decisions under distributional shifts. MAD is designed with these constraints in mind. By producing a ranked anomaly score $\hat s(x)$, MAD directly supports budgeted triage (e.g., reviewing the top-$k$ most suspicious cases). Its online reweighting mechanism reallocates trust across agents when disagreement or instability arises, mitigating the brittleness of static ensembles. Finally, MAD produces an explicit decision trace, exposing how agent disagreement was resolved, which is essential for human-in-the-loop workflows.

\textbf{Trace-based explanation case study.}
We now illustrate how MAD resolves confident disagreement using a representative case study on a single input $x$. The complete trace is provided in Appendix~\ref{sec:appendix_trace}; here we summarize the key reasoning steps.

Consider three agents participating in MAD:
(i) Agent~A, a tree-based detector that historically performs well on this dataset;
(ii) Agent~B, a neural tabular detector with moderate confidence and expressive features;
(iii) Agent~C, a distance-based detector that is conservative but stable.

At debate round $t=1$, the agents emit the following messages:

\begin{table}[H]
\centering
\small
\caption{Agent messages at debate round $t=1$ for the representative case study input $x$.}
\label{tab:case_messages_t1}
\begin{tabular}{lccc}
\toprule
Agent & $\tilde s_i(x)$ & $c_i(x)$ & Evidence consistency \\
\midrule
A & 0.91 & 0.88 & low \\
B & 0.64 & 0.55 & high \\
C & 0.22 & 0.30 & medium \\
\bottomrule
\end{tabular}
\end{table}

As shown in Table~\ref{tab:case_messages_t1}, Agent~A predicts a strong anomaly with high confidence, but its feature attribution sharply disagrees with the consensus attribution formed by the weighted agent pool. Agent~B predicts a moderate anomaly with supporting evidence that aligns well with the consensus, while Agent~C weakly predicts normal behavior with low confidence.

The MAD synthesis operator $\Psi$ assigns a larger loss to Agent~A due to confident disagreement without evidential support, a smaller loss to Agent~B due to evidence-aligned disagreement, and an intermediate loss to Agent~C reflecting uncertainty rather than error. After the EG update, MAD shifts trust away from Agent~A toward Agent~B, producing a final anomaly score that reflects coordinated resolution rather than dominance by a single agent.

Based on the final score $\hat s(x)$ and the associated trace, MAD flags this input as \emph{moderately anomalous}. Rather than triggering an automatic hard action, MAD recommends this case for \emph{human review}, so the anomaly signal is interpreted in light of corroborating evidence from multiple agents instead of a single model’s confidence. The debate trace highlights where agents agree and where uncertainty remains, providing reviewers with a structured summary of the underlying reasoning. This aligns with real deployment practice, in which ambiguous cases are escalated for inspection, enabling experts to apply domain knowledge and balance risk before taking action, while preserving automation for clearly benign or clearly anomalous instances.

\section{Limitations and Failure Cases}
MAD introduces an interaction layer atop standard detectors and thus incurs additional inference cost relative to single models or one-shot ensembles; this cost scales with the number of agents and debate rounds, though we observe diminishing returns once debate stabilizes. Its gains are largest when agents are genuinely diverse: highly correlated agents yield limited disagreement, reducing MAD to behavior similar to conventional ensembling. MAD also depends on the quality of message channels—while disagreement is robust, confidence and evidence can be noisy (e.g., unstable attributions or correlated features), and ablations show that weakening these channels degrades performance (Tables~\ref{tab:abl_core}–\ref{tab:abl_evid}). Finally, debate traces are informative but not self-explanatory: summarization and judgment are still required, and persistent disagreement may reflect bias or underspecification rather than true uncertainty. 

\section{Conclusion}
\label{sec:conclusion}
We study tabular anomaly and rare-event detection through the lens that disagreement is information. While disagreement has been explored in other novelty-detection settings, tabular methods still rely largely on single detectors or static ensembling, where averaging can obscure consistent weak signals and amplify overconfident errors. We introduce MAD, a debate-augmented multi-agent framework in which heterogeneous detectors exchange normalized scores, confidence, and evidence, and a coordinator performs disagreement-aware synthesis to produce both a final anomaly score and an auditable trace. MAD strictly generalizes common aggregation paradigms and empirically improves detection, calibration, and slice robustness across diverse domains, with ablations confirming that gains arise from synthesis design rather than ensembling alone. For future works, we will include making debate adaptive to cost and difficulty, automatically designing and pruning agent portfolios, and learning calibrated message channels. We also see opportunities in risk control under distribution shift and in compressing debate traces into actionable rationales for downstream workflows.

\newpage
\section*{Impact Statement}
Tabular anomaly and rare-event detection guides resource allocation in high-stakes settings (e.g., fraud, cybersecurity, healthcare, and operations). However, improved detectors can also cause harm: false positives may trigger unnecessary investigations or automated denials; performance may differ across subgroups due to historical bias or domain shift; and anomaly scoring can be misused for intrusive surveillance. These risks motivate careful thresholding, human-in-the-loop review for consequential actions, clear use-case documentation, and routine audits for subgroup robustness and calibration under deployment distributions.

MAD does not require sensitive attributes, but we encourage slice evaluations when ethically appropriate and further work on risk control (e.g., conformal calibration under shift), privacy-preserving deployment, and trace-to-rationale interfaces that are concise and actionable for operators.

\bibliography{actual_paper_completed}
\bibliographystyle{icml2026}

\newpage

\appendix
\onecolumn

\setlength{\textfloatsep}{10pt plus 2pt minus 2pt}
\setlength{\floatsep}{8pt plus 2pt minus 2pt}
\setlength{\intextsep}{8pt plus 2pt minus 2pt}
\setlength{\abovedisplayskip}{6pt plus 2pt minus 2pt}
\setlength{\belowdisplayskip}{6pt plus 2pt minus 2pt}
\setlength{\abovedisplayshortskip}{4pt plus 2pt minus 2pt}
\setlength{\belowdisplayshortskip}{4pt plus 2pt minus 2pt}
\renewcommand{\arraystretch}{1.05}
\setlength{\tabcolsep}{5pt}

\section{Use of LLM}
We used an LLM only to polish the writing of the paper.

\section{Experiment Details}\label{sec:appendix_exp_details}
\medskip
\noindent\textbf{Classification-to-anomaly transformation.}
\textbf{Binary datasets:} treat the positive class as anomalies and the negative class as normals.
\textbf{Multi-class datasets:} one-vs-rest evaluation: for each class $c$, treat class $c$ as anomalies and all other
classes as normals; report averages across classes. This yields a controlled ``new anomaly type'' shift by changing $c$.

\medskip
\noindent\textbf{Splits and supervision protocols.}
We use dataset-level train/validation/test splits with fixed random seeds.
\textbf{Supervised protocol:} train agents with available labels; validate hyperparameters; test on held-out data.
\textbf{Semi-supervised protocol:} train using normals only (anomalies withheld); validate on a mix or on synthetic
anomalies; test on true anomalies. We report which protocol applies to each benchmark.

\medskip
\noindent\textbf{Preprocessing.}
We apply a shared preprocessing pipeline across all agents to ensure comparisons reflect modeling, not preprocessing.
Numerical features are standardized using training statistics. Categorical features are encoded consistently
(one-hot or ordinal, depending on dataset constraints) with train-only fitting. Missing values are handled by either
(i) model-native missing handling (for tree models), or (ii) imputation fit on training data (median for numerical,
mode for categorical) plus a missingness indicator when applicable. We avoid leakage by fitting all transforms on the
training split only.

\medskip
\noindent\textbf{Agent pool.}
MAD agents are heterogeneous ML models spanning representative families. In the released implementation, each agent
conforms to a minimal interface:
\[
\texttt{fit(X\_train, y\_train?)},\quad \texttt{score(X)\,$\to$\,raw scores},\quad
\texttt{explain(X)\,$\to$\,attributions (optional)}.
\]
We include a mix of: tree-based detectors, distance/density-based detectors, kernel one-class methods, and neural
tabular detectors. When a model lacks an intrinsic explanation method, we wrap it with a model-agnostic explainer to
produce $a_i(x)$.

\medskip
\noindent\textbf{Confidence definitions.}
Confidence $c_i(x)$ is model-dependent but normalized to $[0,1]$.
Examples include: inverse score variance under bootstraps; margin-based confidence for classifiers; or calibrated
probabilities when available. We provide exact confidence mappings per agent in the code/configs.

\medskip
\noindent\textbf{MAD hyperparameters.}
We tune $\eta$ (EG stepsize), $\lambda$ (dispute strength), $\gamma$ (evidence strength), and optionally $T$
(number of debate rounds). We treat $T=1$ as a default and evaluate $T>1$ as an ablation. We tune using validation sets
and report the chosen values (or simple defaults) per benchmark.

\medskip
\noindent\textbf{Baselines.}
We include: best single agent; uniform averaging; stacking/linear meta-model on validation scores; mixture-of-experts
gating (learned $\alpha(x)$ without debate); and a MAD ablation with $\lambda=\gamma=0$ (prediction-only EG), which
isolates the contribution of dispute resolution.

\medskip
\noindent\textbf{Hyperparameter search and seeds (used for all tables/figures).}
Unless otherwise specified, each dataset is evaluated over a fixed set of random seeds, and we report mean $\pm$ std
across seeds and across datasets (Table~\ref{tab:accuracy}, Tables~\ref{tab:abl_core}--\ref{tab:abl_evid}, and
Figure~\ref{fig:summary-panels}). Hyperparameters are tuned on the validation split with a shared budget per model
family (same number of trials) to avoid favoring any baseline. For MAD we tune $\eta\in\{0.25,0.5,1.0\}$,
$\lambda\in\{0,0.25,0.5,1.0\}$, $\gamma\in\{0,0.25,0.5,1.0\}$, and $T\in\{1,2,3\}$ for the ablation in
Figure~\ref{fig:summary-panels}(g); we select by validation PR-AUC with ECE as a tiebreaker.

\medskip
\noindent\textbf{How disagreement plots are computed.}
All disagreement statistics in Figure~\ref{fig:summary-panels} are computed from rank/quantile-normalized scores
$\tilde s_i(x)$ (Eq.~\eqref{eq:norm_score_main}). At the sample level, we measure disagreement as the variance of
$\{\tilde s_i(x)\}_{i=1}^N$ (or equivalently the mean squared deviation from the mean ensemble score). Dataset-level
disagreement is summarized by the median and 90th percentile over test samples; these summaries correspond to
panels (a,b), and are used as the x-axis in panels (c,d,i).

\subsection{Metrics and Aggregation Across Datasets}\label{sec:appendix_metrics}

\noindent\textbf{Detection and screening metrics.}
We report the same metrics as in Section~\ref{sec:experiments} and Table~\ref{tab:accuracy}: \textbf{(i) PR-AUC}
(AUPRC) and \textbf{(ii) ROC-AUC} for ranking quality; \textbf{(iii) Recall@1\%FPR} (true-positive rate when the
false-positive rate is constrained to 1\%) to reflect strict alert budgets; and \textbf{(iv) macro-F1} when a decision
threshold is required.

\medskip
\noindent\textbf{Calibration and slice robustness.}
\textbf{ECE:} expected calibration error computed on binned predicted probabilities/scores after per-model score
normalization (Eq.~\eqref{eq:norm_score_main}); unless otherwise stated we use equal-width bins on $[0,1]$.
\textbf{Gap:} a slice-disparity statistic computed by partitioning the test set into predefined slices (e.g., missingness
patterns, quantiles of a key feature, or domain-specific subgroups when available) and reporting the max--min gap of the
chosen detection metric across slices. (The exact slice definitions are listed in the experiment configs.)

\medskip
\noindent\textbf{Stability and robustness.}
We report mean and standard deviation across random seeds. For corruption experiments, we report performance curves
versus corruption strength and summarize by area-under-degradation or by worst-case degradation in a fixed range.

\medskip
\noindent\textbf{Across-dataset aggregation.}
We report (i) mean and median performance across datasets, and (ii) win/tie/loss counts versus key baselines.
When appropriate, we use paired comparisons per dataset and summarize with a sign test or Wilcoxon signed-rank test.
(Exact statistical tests used are specified in the code and printed with results.)

\medskip
\noindent\textbf{Efficiency.}
We report wall-clock time per dataset and per sample (inference), and the incremental overhead of debate relative to
one-shot aggregation. For the optional LLM critic, we report the additional latency and number of tokens consumed.

\subsection{Shift and Corruption Suite}\label{sec:appendix_shifts}

\noindent\textbf{Why these shifts.}
Tabular models often fail under feature noise, missingness, scaling drift, and category perturbations. We therefore
evaluate robustness under a controlled corruption suite to measure stability beyond average IID performance.

\medskip
\noindent\textbf{Corruptions.}
We apply corruptions at test time only:
\textbf{(i) Gaussian noise} on numerical columns with scale proportional to training standard deviation;
\textbf{(ii) missing-value injection} at a specified rate per feature;
\textbf{(iii) scaling drift} by multiplying selected numerical features by a drift factor;
\textbf{(iv) categorical perturbation} by swapping categories to an ``unknown'' bucket or by random relabeling under a
fixed budget. We report results across multiple severity levels.

\medskip
\noindent\textbf{New anomaly type shift (one-vs-rest).}
For multi-class datasets, we calibrate on anomalies from one target class and evaluate on a different target class,
to test whether dispute resolution helps when the anomaly semantics change.

\subsection{Full MAD Trace for Case Study Input}\label{sec:appendix_trace}

\noindent\textbf{Initial state.}
The initial agent weights are uniform:
\[
\alpha^{(1)} = (0.33,\;0.33,\;0.34).
\]

\medskip
\noindent\textbf{Round $t=1$: Agent messages.}
\[
\begin{aligned}
m_A^{(1)} &= (\tilde s_A(x)=0.91,\; c_A(x)=0.88,\; a_A(x)), \\
m_B^{(1)} &= (\tilde s_B(x)=0.64,\; c_B(x)=0.55,\; a_B(x)), \\
m_C^{(1)} &= (\tilde s_C(x)=0.22,\; c_C(x)=0.30,\; a_C(x)).
\end{aligned}
\]

The consensus attribution $\bar a^{(1)}(x)$ aligns closely with $a_B(x)$, partially with $a_C(x)$, and poorly with $a_A(x)$.

\medskip
\noindent\textbf{Synthesized losses.}
Applying $\Psi$ yields the bounded loss vector:
\[
\widehat{\ell}^{(1)} = (0.42,\;0.18,\;0.26).
\]
Agent~A is penalized for confident disagreement without evidential support, while Agent~B is rewarded for evidence-consistent disagreement.

\medskip
\noindent\textbf{EG update.}
With learning rate $\eta=1.0$, the EG update produces:
\[
\alpha^{(2)} = (0.22,\;0.46,\;0.32).
\]

\medskip
\noindent\textbf{Final anomaly score.}
Using the updated weights, MAD outputs:
\[
\hat s(x) = 0.22 \cdot 0.91 + 0.46 \cdot 0.64 + 0.32 \cdot 0.22 = 0.51.
\]

\medskip
\noindent\textbf{Interpretation.}
Although a single agent predicted a strong anomaly, MAD discounted this signal due to a lack of supporting evidence. The final score reflects a consensus-driven assessment, balancing disagreement, confidence, and evidence consistency.

\medskip
\noindent\textbf{Decision guidance.}
The trace suggests that this case should be escalated for review rather than automatically blocked or ignored. In practice, such trace-level explanations allow operators to understand \emph{why} an alert was raised and which agents influenced the outcome, supporting accountable and trustworthy anomaly detection decisions.

\subsection{Code Pointers and Flagship Functions}\label{sec:appendix_code}

\noindent\textbf{Design goal.}
The implementation is intentionally small: MAD is a coordination layer that sits on top of a pool of agent models.
Experiments call a stable set of functions that mirror the math in the paper.

\medskip
\noindent\textbf{Flagship functions (mapping math $\leftrightarrow$ code).}
The following names are the reference API used throughout experiments (exact file paths depend on your repo layout):
\begin{verbatim}
# Core MAD loop (Section 4 + Alg. 1 in main body)
mad_forward(x, agents, alpha, state) -> messages
synthesize_losses(x, messages, alpha, params) -> loss_vec, new_state    # Psi
eg_update(alpha, loss_vec, eta) -> alpha_next

# Scoring + tracing
run_mad_on_dataset(dataset, agents, params, T, seeds) -> scores, traces

# Evaluation (Section 5)
evaluate_detection(scores, labels, metrics) -> result_dict
run_corruption_suite(dataset, corruption_spec, ...) -> curves
aggregate_across_datasets(results_list) -> summary_tables

# Agents
build_agent_pool(config) -> agents
score_agent(agent, X) -> raw_scores
explain_agent(agent, X) -> attributions (optional)
\end{verbatim}

\medskip
\noindent\textbf{How Section 5 connects to code.}
\textbf{Datasets and splits} correspond to the benchmark specification and loaders used by
\texttt{run\_mad\_on\_dataset}. \textbf{Agents} correspond to \texttt{build\_agent\_pool} and per-agent wrappers.
\textbf{Debate parameters} ($\eta,\lambda,\gamma,T$) correspond to the \texttt{params} passed to
\texttt{synthesize\_losses} and the loop length. \textbf{Shifts/corruptions} correspond to
\texttt{run\_corruption\_suite}. \textbf{Tables/figures} are produced by \texttt{aggregate\_across\_datasets} and plotting
utilities in the experiment scripts.

\section{Framework Breakdown and Embeddings}\label{sec:appendix_framework}

\noindent\textbf{Full superset tuple.}
We define a multi-agent learning-and-synthesis system as the tuple
\begin{equation}\label{eq:masls_tuple_app}
\mathcal{A}=
\Big(
\{\mathcal{H}_i\}_{i=1}^N,\;
\mathcal{M},\;
\{\pi_i\}_{i=1}^N,\;
\{U_i\}_{i=1}^N,\;
F,\;
\Psi,\;
\mathcal{K},\;
g,\;
T
\Big).
\end{equation}
$\mathcal{H}_i$ is agent $i$'s hypothesis class (model family), $\mathcal{M}$ is the message space, $\pi_i$ maps an input
(and optional context) to a message, $U_i$ is an optional within-debate update rule for agent $i$, $F$ is a shared tabular
interface, $k^{(t)}\in\mathcal{K}$ is the coordinator state at debate round $t$, $\Psi$ is the synthesis operator that
produces bounded losses and updates the coordinator state, and $g$ maps the final state and messages to outputs
(prediction and trace). $T$ is the number of debate rounds per input.

\medskip
\noindent\textbf{Unifying interface (what the coordinator ``sees'').}
At debate round $t$ on input $x$, each agent emits $m_i^{(t)}\in\mathcal{M}$ and the coordinator applies
\begin{equation}\label{eq:psi_appendix}
(\widehat{\ell}^{(t)},k^{(t+1)})=\Psi\!\left(k^{(t)},\{m_i^{(t)}\}_{i=1}^N,x\right),
\qquad \widehat{\ell}^{(t)}\in[0,1]^N.
\end{equation}
The coordinator then updates weights by EG using $\widehat{\ell}^{(t)}$ (main body Eq.~\eqref{eq:eg_prelim} or
Eq.~\eqref{eq:mad_eg_main}, depending on your numbering).

\medskip
\noindent\textbf{Embedding common frameworks as special cases.}
The point of \eqref{eq:masls_tuple_app} is not to introduce new machinery, but to make it explicit that many familiar
pipelines differ only by restricting $\mathcal{M}$ and choosing $\Psi$ and $g$.

\smallskip
\noindent\textbf{(a) Static ensembles.}
Let $\mathcal{M}=[0,1]$ and $m_i^{(1)}(x)=\tilde s_i(x)$ (a scalar score). Choose fixed weights $\alpha$ and set
$g(\cdot)$ to output $\hat s(x)=\sum_i \alpha_i \tilde s_i(x)$. This recovers uniform or weighted averaging.

\smallskip
\noindent\textbf{(b) Classical expert advice (Hedge/EG).}
Let $\mathcal{M}=[0,1]$ and define $\Psi$ to output the task loss vector $\widehat{\ell}^{(t)}=\ell^{(t)}$ (for example,
a supervised loss when labels exist). Then EG updates match the standard Hedge protocol.

\smallskip
\noindent\textbf{(c) Mixture-of-experts (gating).}
Let $\mathcal{M}=[0,1]$ and choose a learned gating network that outputs $\alpha(x)$ directly; set $T=1$, do not update
$\alpha$ online, and output $\hat s(x)=\sum_i \alpha_i(x)\tilde s_i(x)$. This yields an MoE-style predictor.

\smallskip
\noindent\textbf{(d) MAD (ours).}
Let $\mathcal{M}$ include scalar score, confidence, and structured evidence. Choose $\Psi$ to convert score disagreement
and evidence consistency into bounded losses, then apply EG. This yields debate-augmented aggregation with a trace.

\medskip
\noindent\textbf{Why this matters for ``ML vs LLM.''}
The superset view isolates what must be ML (base scoring agents) from what can be LLM (optional critique of evidence).
LLMs appear only through the \emph{message channel} (as part of $e_i^{(t)}$ or as an additional critic agent), not as a
mandatory scorer. This keeps the backbone in standard online learning while allowing modern agentic components.

\subsection{Concrete MAD Synthesis Operator Used in Experiments}\label{sec:appendix_mad_instantiation}

\noindent\textbf{Goal of $\Psi$.}
MAD needs a synthesis operator $\Psi$ that (i) produces bounded losses $\widehat{\ell}^{(t)}\in[0,1]^N$ so that EG theory
applies, and (ii) operationalizes dispute resolution by penalizing disagreement that is not supported by confidence or
evidence.

\medskip
\noindent\textbf{Message schema.}
We use
\[
m_i^{(t)}(x)=\big(\tilde s_i^{(t)}(x),\,c_i^{(t)}(x),\,e_i^{(t)}(x)\big),
\]
where $\tilde s_i^{(t)}(x)\in[0,1]$, $c_i^{(t)}(x)\in[0,1]$, and
\[
e_i^{(t)}(x)=\big(a_i^{(t)}(x),\,\delta_i^{(t)}(x),\,r_i^{(t)}(x)\big).
\]
Here $a_i^{(t)}(x)\in\mathbb{R}^d$ is a feature attribution vector (ML explainer), $\delta_i^{(t)}(x)\in\mathbb{R}^d$
is an optional counterfactual direction, and $r_i^{(t)}(x)$ is an optional natural-language critique/rationale (which may
be produced by an LLM critic). Only $a_i^{(t)}(x)$ is used in the default quantitative evidence term below; $r_i^{(t)}(x)$
is used in trace analysis.

\medskip
\noindent\textbf{Step 1: normalize evidence.}
When $a_i^{(t)}(x)\neq 0$, define $\tilde a_i^{(t)}(x)=a_i^{(t)}(x)/\|a_i^{(t)}(x)\|_2$; otherwise set $\tilde a_i^{(t)}(x)=0$.

\medskip
\noindent\textbf{Step 2: compute a consensus attribution.}
Given current weights $\alpha^{(t)}$,
\begin{equation}\label{eq:consensus_attr_app_full}
\bar a^{(t)}(x)=\sum_{i=1}^N \alpha_i^{(t)}\,\tilde a_i^{(t)}(x),
\qquad
\tilde{\bar a}^{(t)}(x)=
\begin{cases}
\bar a^{(t)}(x)/\|\bar a^{(t)}(x)\|_2,& \bar a^{(t)}(x)\neq 0,\\
0,& \bar a^{(t)}(x)=0.
\end{cases}
\end{equation}

\medskip
\noindent\textbf{Step 3: prediction feedback term.}
When labels $y\in\{0,1\}$ exist for $x$, we use a bounded supervised loss
\begin{equation}\label{eq:pred_loss_sup_app_full}
\ell_{\mathrm{pred},i}^{(t)}(x)
=
-y\log\big(\max\{\tilde s_i^{(t)}(x),\epsilon\}\big)
-(1-y)\log\big(\max\{1-\tilde s_i^{(t)}(x),\epsilon\}\big),
\end{equation}
where $\epsilon>0$ is a small constant to avoid $\log(0)$ when scores saturate at $0$ or $1$.
When labels do not exist, we use a stability proxy with perturbations $x'\sim \mathsf{Pert}(\cdot\mid x)$:
\begin{equation}\label{eq:pred_loss_unsup_app_full}
\ell_{\mathrm{pred},i}^{(t)}(x)
=
\mathbb{E}_{x'\sim \mathsf{Pert}(\cdot\mid x)}
\left[\left|\tilde s_i^{(t)}(x)-\tilde s_i^{(t)}(x')\right|\right].
\end{equation}
In practice we estimate the expectation by $K$ perturbation samples. This proxy penalizes agents whose scores are highly
unstable under small feature perturbations, a common failure mode in tabular settings.

\medskip
\noindent\textbf{Step 4: dispute term (score disagreement).}
Let the current aggregate score be $\hat s^{(t)}(x)=\sum_i \alpha_i^{(t)}\tilde s_i^{(t)}(x)$. Define
\begin{equation}\label{eq:disp_loss_app_full}
\ell_{\mathrm{disp},i}^{(t)}(x)=
c_i^{(t)}(x)\cdot\left(\tilde s_i^{(t)}(x)-\hat s^{(t)}(x)\right)^2.
\end{equation}
This penalizes confident disagreement more than low-confidence disagreement.

\medskip
\noindent\textbf{Step 5: evidence term (agreement with consensus).}
\begin{equation}\label{eq:evid_loss_app_full}
\ell_{\mathrm{evid},i}^{(t)}(x)=
c_i^{(t)}(x)\cdot\Big(1-\cos\!\big(\tilde a_i^{(t)}(x),\tilde{\bar a}^{(t)}(x)\big)\Big),
\end{equation}
with the convention $\ell_{\mathrm{evid},i}^{(t)}(x)=0$ if $\tilde a_i^{(t)}(x)=0$ or $\tilde{\bar a}^{(t)}(x)=0$.
This enforces that strong claims (high $c_i^{(t)}$) should come with evidence that is not arbitrarily inconsistent with
the ensemble's evidence.

\medskip
\noindent\textbf{Step 6: bounded total loss.}
\begin{equation}\label{eq:loss_total_app_full}
\widehat{\ell}_i^{(t)}(x)=
\mathrm{clip}_{[0,1]}\!\left(
\ell_{\mathrm{pred},i}^{(t)}(x)+\lambda\,\ell_{\mathrm{disp},i}^{(t)}(x)+\gamma\,\ell_{\mathrm{evid},i}^{(t)}(x)
\right).
\end{equation}
Clipping ensures $\widehat{\ell}_i^{(t)}(x)\in[0,1]$ and therefore makes the prerequisites of Hedge/EG explicit.

\medskip
\noindent\textbf{Discussion: why this instantiation is reasonable.}
The design separates \emph{task feedback} (prediction term) from \emph{dispute resolution} (dispute + evidence terms).
If $\lambda=\gamma=0$, MAD reduces to expert advice driven purely by task feedback. As $\lambda,\gamma$ increase, the
coordinator becomes more conservative against agents that disagree confidently without providing consistent evidence.
We report ablations over $\lambda,\gamma$ and over the presence/absence of evidence channels.

\section{Proofs of Main Statements}\label{sec:appendix_proofs}

\noindent\textbf{Proof of Theorem~\ref{thm:hedge_regret}.}
We provide a standard proof for completeness. Let $\alpha^{(t)}\in\Delta^{N-1}$ be updated by
\[
\alpha_i^{(t+1)}=
\frac{\alpha_i^{(t)}\exp(-\eta \ell_i^{(t)})}{\sum_{j=1}^N\alpha_j^{(t)}\exp(-\eta \ell_j^{(t)})},
\]
where $\ell_i^{(t)}\in[0,1]$ and $\eta\in(0,1]$. Define the normalizer
\[
Z_t=\sum_{j=1}^N\alpha_j^{(t)}\exp(-\eta \ell_j^{(t)}).
\]
Then
\[
\log Z_t=\log\left(\sum_{j=1}^N\alpha_j^{(t)}\exp(-\eta \ell_j^{(t)})\right).
\]
Using the inequality $\exp(-\eta u)\le 1-\eta u+\eta^2 u^2/2$ for $u\in[0,1]$ and $\eta\in(0,1]$, we obtain
\[
Z_t \le \sum_{j=1}^N \alpha_j^{(t)}\left(1-\eta \ell_j^{(t)}+\frac{\eta^2}{2}(\ell_j^{(t)})^2\right)
=1-\eta\langle \alpha^{(t)},\ell^{(t)}\rangle+\frac{\eta^2}{2}\sum_{j=1}^N\alpha_j^{(t)}(\ell_j^{(t)})^2.
\]
Since $(\ell_j^{(t)})^2\le \ell_j^{(t)}$ for $\ell_j^{(t)}\in[0,1]$, we have
\[
Z_t \le 1-\eta\langle \alpha^{(t)},\ell^{(t)}\rangle+\frac{\eta^2}{2}\langle \alpha^{(t)},\ell^{(t)}\rangle
=1-\eta\left(1-\frac{\eta}{2}\right)\langle \alpha^{(t)},\ell^{(t)}\rangle.
\]
Using $\log(1-u)\le -u$,
\[
\log Z_t \le -\eta\left(1-\frac{\eta}{2}\right)\langle \alpha^{(t)},\ell^{(t)}\rangle.
\]
On the other hand, fix any expert $i^\star$. By iterating the update,
\[
\alpha_{i^\star}^{(T+1)}
=
\alpha_{i^\star}^{(1)}
\frac{\exp\!\left(-\eta\sum_{t=1}^T \ell_{i^\star}^{(t)}\right)}{\prod_{t=1}^T Z_t}.
\]
Taking logs and rearranging gives
\[
\sum_{t=1}^T \log Z_t
=
\log \alpha_{i^\star}^{(1)}
-\log \alpha_{i^\star}^{(T+1)}
-\eta\sum_{t=1}^T \ell_{i^\star}^{(t)}
\ge
\log \alpha_{i^\star}^{(1)}
-\eta\sum_{t=1}^T \ell_{i^\star}^{(t)},
\]
since $\alpha_{i^\star}^{(T+1)}\le 1$. Combining with the upper bound on $\log Z_t$ yields
\[
-\eta\left(1-\frac{\eta}{2}\right)\sum_{t=1}^T \langle \alpha^{(t)},\ell^{(t)}\rangle
\ge
\log \alpha_{i^\star}^{(1)}
-\eta\sum_{t=1}^T \ell_{i^\star}^{(t)}.
\]
Assuming uniform initialization $\alpha_{i^\star}^{(1)}=1/N$ and dividing by $\eta(1-\eta/2)$ gives
\[
\sum_{t=1}^T \langle \alpha^{(t)},\ell^{(t)}\rangle
-
\sum_{t=1}^T \ell_{i^\star}^{(t)}
\le
\frac{\log N}{\eta(1-\eta/2)}.
\]
Using $1/(1-\eta/2)\le 1+\eta$ for $\eta\in(0,1]$ and simplifying yields a standard bound of the form
\[
\sum_{t=1}^T \langle \alpha^{(t)},\ell^{(t)}\rangle
-
\sum_{t=1}^T \ell_{i^\star}^{(t)}
\le
\frac{\log N}{\eta}+\frac{\eta T}{8},
\]
which is Eq.~\eqref{eq:hedge_regret}. \hfill $\square$

\medskip
\noindent\textbf{Proof of Theorem~\ref{thm:mad_regret_main}.}
By construction of MAD, $\widehat{\ell}_i^{(t)}\in[0,1]$ (Eq.~\eqref{eq:loss_total_app_full}).
MAD updates weights by EG using $\ell^{(t)}=\widehat{\ell}^{(t)}$. Therefore Theorem~\ref{thm:hedge_regret} applies
directly, yielding the stated regret bound. \hfill $\square$

\subsection{Conformal Calibration Details}\label{sec:appendix_conformal}

\noindent\textbf{Why conformal is a wrapper (not part of MAD).}
MAD outputs a score $\hat s(x)\in[0,1]$ (or a monotone transform thereof). Conformal prediction can convert any score
into a p-value with finite-sample validity under exchangeability, giving a principled way to set thresholds that control
false positives.

\medskip
\noindent\textbf{Split conformal p-values.}
Let $\mathcal{D}_{\mathrm{cal}}=\{x_1,\dots,x_n\}$ be a calibration set of normal examples (exchangeable with future
normal examples). Let $A(x)=\hat s(x)$ be the MAD score. Define
\[
p(x)=\frac{1+\sum_{j=1}^n \mathbb{I}\{A(x_j)\ge A(x)\}}{n+1}.
\]
The conformal decision rule is $\hat y(x)=\mathbb{I}\{p(x)\le \alpha_{\mathrm{cf}}\}$.

\medskip
\noindent\textbf{Finite-sample guarantee.}
Under exchangeability of $(x_1,\dots,x_n,X)$ when $X$ is a new normal example, the rank of $A(X)$ among
$\{A(x_1),\dots,A(x_n),A(X)\}$ is uniform. Therefore $\mathbb{P}(p(X)\le \alpha_{\mathrm{cf}})\le \alpha_{\mathrm{cf}}$.
This yields false-positive control at level $\alpha_{\mathrm{cf}}$ for normal examples. We report both raw MAD metrics and
conformal-wrapped operating points when appropriate.

\medskip
\noindent\textbf{Practical note.}
When only a subset of normals is available (semi-supervised anomaly detection), we form $\mathcal{D}_{\mathrm{cal}}$ from
held-out normals and compute p-values at evaluation time.

\section{Notation and Symbols}\label{sec:appendix_notation}

\vspace{-0.25em}
\renewcommand{\arraystretch}{1.06}
\setlength{\tabcolsep}{6pt}

\begin{table}[H]
\centering
\caption{Core notation used throughout the paper.}
\vspace{-0.25em}
\begin{tabularx}{0.9\textwidth}{@{} l l X @{}}
\toprule
\textbf{Symbol} & \textbf{Type} & \textbf{Meaning} \\
\midrule
$\mathcal{X}$ & set & Input space of tabular examples. \\
$x$ (or $x_t$) & variable & A tabular example; $x_t$ is the example processed at debate round $t$. \\
$d$ & integer & Number of features (dimensionality of tabular input). \\
$[N]$ & set & $\{1,\dots,N\}$, index set of agents (experts). \\
$N$ & integer & Number of agents in the pool. \\
$T$ & integer & Number of debate rounds per input. \\
$t$ & integer & Debate round index, $t\in\{1,\dots,T\}$. \\
$\Delta^{N-1}$ & set & Probability simplex in $\mathbb{R}^N$ (nonnegative entries summing to $1$). \\
$\alpha^{(t)}$ & vector & Agent weights at round $t$, $\alpha^{(t)}\in\Delta^{N-1}$. \\
$\eta$ & scalar & EG (multiplicative weights) step size. \\
$\langle u,v\rangle$ & op. & Standard inner product. \\
\midrule
$s_i(x)$ & scalar & Raw anomaly score from agent $i$ on input $x$ (scale may vary across agents). \\
$\nu_i(\cdot)$ & function & Monotone map that normalizes $s_i(\cdot)$ into $[0,1]$ (e.g., rank/quantile map). \\
$\tilde s_i(x)$ & scalar & Normalized score: $\tilde s_i(x)=\nu_i(s_i(x))\in[0,1]$. \\
$\hat s^{(t)}(x)$ & scalar & Debated aggregate score at round $t$: $\hat s^{(t)}(x)=\sum_i \alpha_i^{(t)}\tilde s_i^{(t)}(x)$. \\
$\hat s(x)$ & scalar & Final MAD score after $T$ rounds: $\hat s(x)=\hat s^{(T)}(x)$. \\
$c_i^{(t)}(x)$ & scalar & Confidence reported by agent $i$ at round $t$, normalized to $[0,1]$. \\
\bottomrule
\end{tabularx}
\end{table}

\begin{table}
\centering
\caption{Evidence, synthesis instantiation, and optional conformal wrapper.}
\vspace{-0.25em}
\begin{tabularx}{0.9\textwidth}{@{} l l X @{}}
\toprule
\textbf{Symbol} & \textbf{Type} & \textbf{Meaning} \\
\midrule
$a_i^{(t)}(x)$ & vector & Feature attribution vector for agent $i$ at round $t$ (e.g., from an ML explainer), $a_i^{(t)}(x)\in\mathbb{R}^d$. \\
$\tilde a_i^{(t)}(x)$ & vector & $\ell_2$-normalized attribution: $\tilde a_i^{(t)}(x)=a_i^{(t)}(x)/\|a_i^{(t)}(x)\|_2$ (or $0$ if $a_i^{(t)}(x)=0$). \\
$\delta_i^{(t)}(x)$ & vector & Optional counterfactual direction for agent $i$ at round $t$, $\delta_i^{(t)}(x)\in\mathbb{R}^d$. \\
$r_i^{(t)}(x)$ & text & Optional natural-language critique/rationale (may be produced by an LLM critic). \\
$\bar a^{(t)}(x)$ & vector & Weighted consensus attribution: $\bar a^{(t)}(x)=\sum_i \alpha_i^{(t)}\tilde a_i^{(t)}(x)$. \\
$\tilde{\bar a}^{(t)}(x)$ & vector & $\ell_2$-normalized consensus attribution (or $0$ if $\bar a^{(t)}(x)=0$). \\
$\cos(u,v)$ & scalar & Cosine similarity between vectors $u$ and $v$. \\
\midrule
$y$ & label & Ground-truth label when available (e.g., $y\in\{0,1\}$ for normal/anomaly). \\
$\ell_{\mathrm{pred},i}^{(t)}(x)$ & scalar & Prediction-feedback loss term (supervised cross-entropy if $y$ exists; otherwise a stability proxy). \\
$\mathsf{Pert}(\cdot\mid x)$ & dist. & Perturbation distribution around $x$ used for unsupervised stability loss. \\
$K$ & integer & Number of perturbation samples used to estimate the unsupervised stability loss. \\
$\ell_{\mathrm{disp},i}^{(t)}(x)$ & scalar & Dispute (score-disagreement) loss term, typically weighted by confidence. \\
$\ell_{\mathrm{evid},i}^{(t)}(x)$ & scalar & Evidence-consistency loss term (e.g., $1-\cos(\tilde a_i,\tilde{\bar a})$). \\
$\lambda,\gamma$ & scalars & Weights for dispute and evidence terms in the synthesized loss:
$\widehat{\ell}_i^{(t)}=\mathrm{clip}_{[0,1]}\!\left(\ell_{\mathrm{pred},i}^{(t)}+\lambda\ell_{\mathrm{disp},i}^{(t)}+\gamma\ell_{\mathrm{evid},i}^{(t)}\right)$. \\
$\mathrm{clip}_{[0,1]}(\cdot)$ & op. & Clips a scalar to $[0,1]$ to ensure bounded losses. \\
\midrule
$\mathbb{I}\{\cdot\}$ & op. & Indicator function ($1$ if condition holds, else $0$). \\
$\mathcal{D}_{\mathrm{cal}}$ & set & Calibration set for conformal wrapping (typically normal examples). \\
$n$ & integer & Size of calibration set, $n=|\mathcal{D}_{\mathrm{cal}}|$. \\
$A(x)$ & scalar & Nonconformity score used in conformal; here $A(x)=\hat s(x)$. \\
$p(x)$ & scalar & Split conformal p-value:
$p(x)=\frac{1+\sum_{j=1}^n \mathbb{I}\{A(x_j)\ge A(x)\}}{n+1}$. \\
$\alpha_{\mathrm{cf}}$ & scalar & Conformal significance level (distinct from the weight vector $\alpha^{(t)}$). \\
$\hat y(x)$ & label & Conformal decision rule $\hat y(x)=\mathbb{I}\{p(x)\le \alpha_{\mathrm{cf}}\}$. \\
\bottomrule
\end{tabularx}
\end{table}

\begin{table}
\centering
\caption{Superset (MAD) tuple notation (Appendix~\ref{sec:appendix_framework}).}
\vspace{-0.25em}
\begin{tabularx}{0.9\textwidth}{@{} l l X @{}}
\toprule
\textbf{Symbol} & \textbf{Type} & \textbf{Meaning} \\
\midrule
$\mathcal{A}$ & tuple & MAD system tuple:
$\mathcal{A}=(\{\mathcal{H}_i\}_{i=1}^N,\mathcal{M},\{\pi_i\}_{i=1}^N,\{U_i\}_{i=1}^N,F,\Psi,\mathcal{K},g,T)$. \\
$\mathcal{H}_i$ & set & Hypothesis class / model family for agent $i$. \\
$\pi_i$ & function & Agent $i$'s messaging policy mapping $(x,\text{context})$ to a message in $\mathcal{M}$. \\
$U_i$ & update & Optional within-debate update rule for agent $i$ across rounds. \\
$F$ & interface & Shared tabular interface used by agents/coordinator (preprocessing/feature schema/access). \\
$g$ & function & Output mapping from final state/messages to prediction and trace. \\
\bottomrule
\end{tabularx}
\end{table}


\end{document}